\documentclass[11pt]{article}
\usepackage{amsthm}
\newtheorem{proposition}{Proposition}
\usepackage[preprint]{acl}

\usepackage{times}
\usepackage{latexsym}

\usepackage[T1]{fontenc}

\usepackage[utf8]{inputenc}

\usepackage{microtype}

\usepackage{inconsolata}

\usepackage{graphicx}
\usepackage{amsmath}
\usepackage{cleveref}

\usepackage{svg}
\usepackage{hyperref}       
\usepackage{url}            
\usepackage{booktabs}       
\usepackage{amsfonts}       
\usepackage{nicefrac}       
\usepackage{microtype}      
\usepackage{xcolor}         
 \usepackage{pifont}
\newcommand{\cmark}{\ding{51}}
\newcommand{\xmark}{\ding{55}}
\usepackage[table]{xcolor}

\title{Caching for the Future: Scrub Jay Episodic Memory Principles for Agent Memory Systems}

\author{
\begin{tabular}{cc}
\textbf{Kartikey Singh Bhandari} &
\textbf{Aarya Wadhwani} \\[0.25em]
\textbf{Dhruv Kumar} &
\textbf{Pratik Narang}
\end{tabular}
\\[0.75em]
Birla Institute of Technology and Science, Pilani
\\[0.25em]
\texttt{\{p20241006,f20230759\}@pilani.bits-pilani.ac.in}
\\
\texttt{\{dhruv.kumar,pratik.narang\}@pilani.bits-pilani.ac.in}
}

\begin{document}
\maketitle
\begin{abstract}
LLM agents that persist across sessions accumulate stored memories
whose validity varies enormously by content type, yet existing memory
architectures treat all memories as equally persistent and
systematically contaminate retrieved context with outdated facts.
We show that per-memory, type-conditioned temporal decay, a property
of western scrub jay episodic memory, can be operationalized as an
auto-classified coefficient $\pi_i$ in an external LLM-agent memory
store, yielding \textsc{ScrubJay-MEM}: each memory is encoded as a
jointly-bound What--Where--When tuple with an estimated perishability
$\pi_i$ and utility horizon $\tau_i$, retrieved by query-adaptive
scoring, and revised retroactively at $O(1)$ LLM calls per update.
We introduce the \emph{Temporal Generalization Test} (TGT), a benchmark
with held-out retention intervals and a Generalization Gap (GenGap)
metric. On TGT, ScrubJay-MEM is the only retrieval-based system with
substantially positive GenGap ($+0.108$); on MemoryAgentBench
EventQA-64k it improves F1 by $+2.66$ over Mem0 and $+3.09$ over
Qwen3-Embedding-4B under a llm backbone. A decay
ablation collapses GenGap by $5.7\times$, establishing type-conditioned
decay as necessary for the result. Gains narrow under stronger
backbones and reverse on fact-consolidation tasks, scoping the
contribution to temporal reasoning over perishable facts.
\end{abstract}

\section{Introduction}
\label{sec:intro}

A western scrub jay caching food in autumn does something
current LLM agent memory systems cannot.  Faced with a choice
between stored worms and stored peanuts, the bird recovers
worms when caches are fresh and peanuts when the worms have
rotted~\citep{clayton1998episodic,worsfold2025revisiting}.  It tracks not only what
it cached and where, but also \emph{when}, and uses this to
infer perishability.  The integrated What--Where--When (WWW)
trace it forms is robust to time in a way that LLM agent
memory architectures, despite recent
sophistication~\citep{packer2023memgpt, xu2025amem,
chhikara2025mem0}, have not yet achieved.  Recent reviews of
this literature argue the computational implications of WWW
memory remain
underexplored~\citep{salwiczek2010revisiting}.

This matters because LLM agents now operate over horizons
measured in weeks.  A user's profession may remain valid for
years; their meeting room for the morning is stale by
afternoon; their current branch name may not survive lunch.
As the memory store grows, the share of retrievals that
surface outdated facts grows with
it~\citep{wu2025longmemeval}.  The right memory at the wrong
time is no better than the wrong memory.

The naive fix (discount all memories by age) does not work.
Different memory types decay at fundamentally different rates.
A uniform decay either discounts stable knowledge too
aggressively or retains ephemeral facts too long.  A binary
short-term / long-term partition covers only two points on a
continuum that contains at least four distinguishable rates
(\S\ref{sec:utility}).  Worse, perishability is not static: a
memory initially classified as stable can become
time-sensitive when context shifts.  Solving the temporal
memory problem therefore requires \emph{per-memory},
\emph{type-conditioned}, and \emph{retroactively revisable}
decay: properties no agent memory system currently combines.

The scrub jay's brain provides a template.  We propose
\textsc{ScrubJay-MEM}, an agent memory architecture in which
each memory is stored as an Episodic Memory Unit: a
jointly-bound tuple of semantic content, task context,
timestamp, and an auto-classified perishability coefficient
$\pi_i$.  Retrieval combines all four signals through
query-adaptive weights $[\alpha, \beta, \gamma, \delta]$.  A
Prospective Memory Buffer pre-loads anticipated memories
before task execution, analogous to jays' future-oriented
caching~\citep{raby2007planning}.  Retroactive Contextual
Integration revises decay parameters when new information
arrives, mirroring how jays update cache values when
ecological signals
change~\citep{clayton2001integrated}.  Each component traces
to a cognitive observation rather than an engineering choice.

We evaluate ScrubJay-MEM on two complementary benchmarks.  On
MemoryAgentBench EventQA-64k~\citep{hu2025memoryagentbench},
the system achieves 61.58 F1, the strongest of the systems
evaluated, including A-MEM, Mem0, and Contriever.  We
additionally release the \emph{Temporal Generalization Test}
(TGT), a controlled benchmark with held-out retention
intervals, designed as the computational analog of Clayton and
Dickinson's retention-interval experiment.  On TGT,
ScrubJay-MEM is the only retrieval-based system with
substantially positive Generalization Gap ($+0.108$ vs.\
$\leq -0.022$ for all flat-retrieval baselines).  Ablating
type-conditioned decay collapses this gain by $5.7\times$,
establishing decay as \emph{necessary} for the GenGap result.

\paragraph{Contributions.}
\begin{itemize}

  \item \textbf{A biologically grounded memory architecture}
    (\S\ref{sec:method}) translating the scrub jay's WWW system
    into computational form, with four mechanisms (EMU,
    type-conditioned perishability decay, Retroactive
    Contextual Integration, Prospective Memory Buffer) derived
    from a single cognitive model.

  \item \textbf{Two formal results}
    (Proposition~\ref{prop:rci}, Proposition~\ref{prop:pmb};
    proofs in Appendix~\ref{app:proofs}): per-update
    boundedness with \textsc{what}-contraction for RCI, and
    expected sub-linear retrieval cost for the Prospective
    Memory Buffer.

  \item \textbf{The Temporal Generalization Test (TGT)}
    (\S\ref{sec:e3-tgt}; construction details in
    Appendix~\ref{app:tgt}), a controlled benchmark with
    held-out retention intervals, designed to measure whether memory systems
    generalise temporal decay knowledge to unseen intervals.

  \item \textbf{Type-conditioned decay as the necessary
    mechanism for TGT generalisation}
    (\S\ref{sec:experiments}). Ablating perishability decay
    collapses GenGap by $5.7\times$ while leaving
    staleness-detection accuracy intact, evidence that the
    GenGap gain requires the decay component, even if
    independent contributions from the remaining three
    primitives are not isolated here.

\end{itemize}

\begin{figure*}[t]
  \centering
  \includegraphics[width=\textwidth]{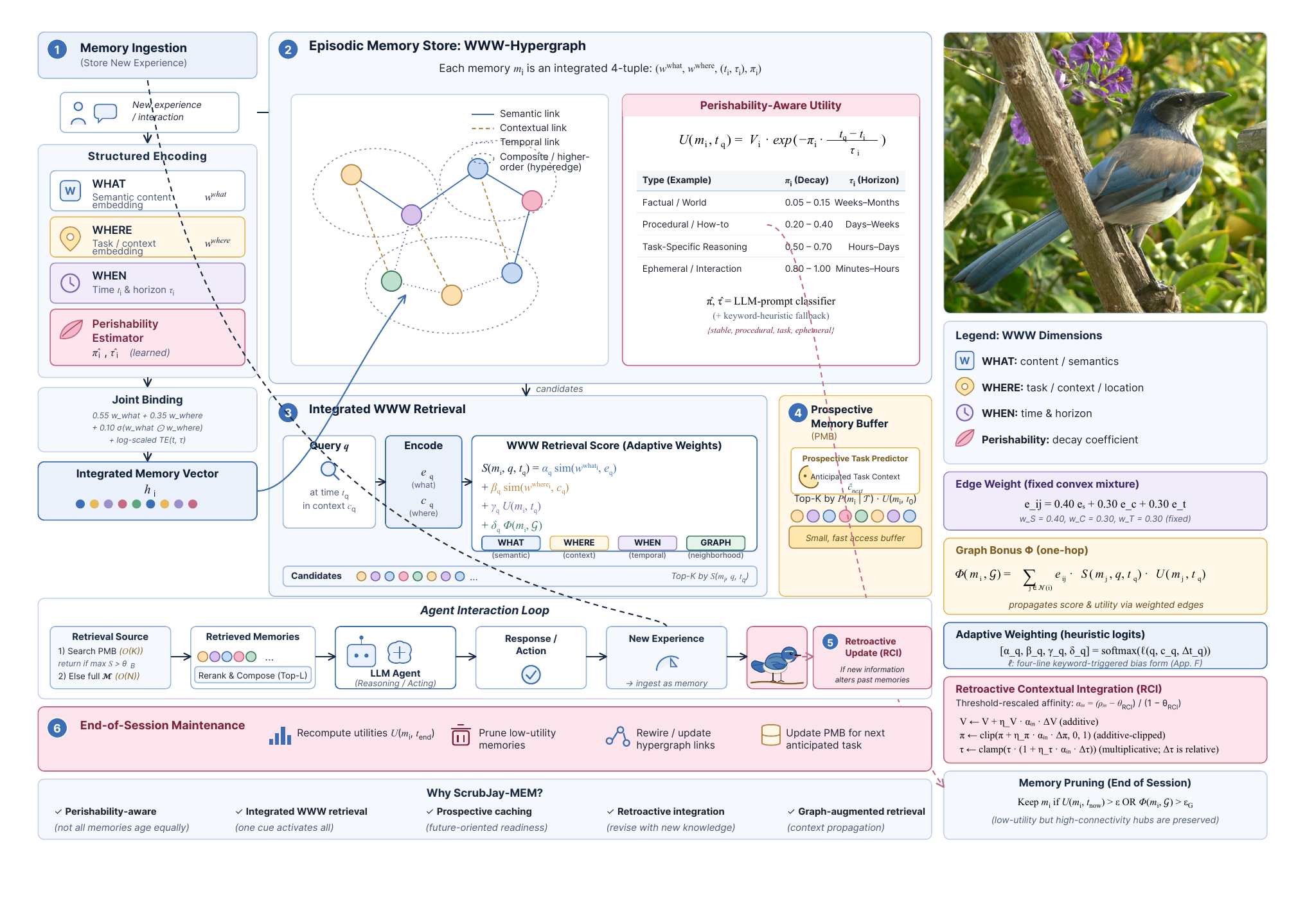}
  \caption{ScrubJay-MEM architecture (blueprint view).\protect\footnotemark{} The system
    encodes each experience as an integrated WWW (what--where--when)
    episodic memory with perishability $\pi$ and horizon $\tau$,
    organizes memories in a hypergraph, retrieves via a four-factor
    adaptive score, preloads likely-to-be-needed memories
    proactively (PMB), and retroactively updates utility and decay
    when new information arrives (RCI). \emph{The figure depicts the
    full architectural blueprint with trainable primitives; the implementation evaluated in this paper substitutes
    parameter-free surrogates at three points
    (\S\ref{sec:emu}--\ref{sec:retrieval}): a fixed-weight gated
    fusion in place of cross-attention binding, an LLM-prompt
    classifier with keyword fallback in place of the learned
    perishability MLP, and heuristic keyword-triggered logits in
    place of the learned adaptive-weighting head.}}

  \label{fig:architecture}
\end{figure*}

\section{Related Work}
\label{sec:related}

\paragraph{LLM agent memory architectures.}
External memory dominates long-horizon agent design.
MemGPT~\citep{packer2023memgpt} pages between main and
archival stores; Mem0~\citep{chhikara2025mem0} separates
short- and long-term tiers;
A-MEM~\citep{xu2025amem} introduces Zettelkasten linking
with LLM-driven evolution.
LiCoMemory~\citep{huang2025licomemory},
Zep~\citep{rasmussen2025zep},
Synapse~\citep{sun2025synapse},
ReadAgent~\citep{lee2024readagent},
and MemoryBank~\citep{zhong2024memorybank} add graph linking, temporal
entities, dynamic relations, gist compression, and
consolidation loops respectively.
Generative Agents~\citep{park2023generative} score by
importance, recency, and relevance;
Hindsight~\citep{zhao2025hindsight} reinterprets memories
textually post-hoc.
MemoryAgentBench~\citep{hu2025memoryagentbench} formalises
memory evaluation across four competencies; we adopt its
EventQA-64k as our primary benchmark.
\emph{ScrubJay-MEM departs from this landscape by assigning
each memory an auto-classified perishability coefficient
$\pi_i$ that governs type-conditioned exponential decay, and
by routing retrieval through query-adaptive
What--Where--When weights rather than fixed similarity.
Temporal validity becomes an inspectable architectural
parameter rather than an LLM-derived textual attribute.}

\paragraph{Temporal decay and forgetting.}
The Ebbinghaus
curve~\citep{ebbinghaus1885memory, anderson2004integrated,
brown2007temporal} models decay as a global function of time.
The Forgetting Transformer~\citep{liu2024forgetting} adds
forget gates inside attention;
FadeMem~\citep{liu2025fademem} and
MemoryBank~\citep{zhong2024memorybank} apply uniform
Ebbinghaus decay to external memory;
Oblivion~\citep{chang2025oblivion} and selective-forgetting
variants~\citep{huang2023selective, wang2023voyager,
mu2024learning, shi2023adaptive} make binary retain/discard
decisions.
\emph{Our work differs along three axes: decay is
\textbf{per-memory} rather than global, the rate is
\textbf{type-conditioned} through an auto-classified $\pi_i$,
and the rate is \textbf{retroactively revisable} through
parameter-space updates that cost $O(1)$ LLM calls regardless
of how many memories are affected.}

\footnotetext{Scrub jay photograph: Linda Tanner, "Western Scrub Jay," via Wikimedia Commons, licensed under CC BY 2.0. Source: \url{https://en.wikipedia.org/wiki/File:Aphelocoma_californica_3.jpg}.}
\paragraph{Episodic memory and anticipation.}
Episodic-memory architectures~\citep{tulving1972episodic,
pritzel2017neural, ritter2021rapid, wayne2018unsupervised}
predate LLM agents; WWW memory itself has appeared in
robotics and virtual
agents~\citep{duff2006www, brom2010episodic, crystal2009elements}
but not as an LLM-agent memory substrate.  Recent behavioural
work extends the corvid WWW literature with new controlled
analyses~\citep{worsfold2025revisiting}.
Prospective cognition~\citep{szpunar2010taxonomy,
schacter2012future} and
prefetching~\citep{smith1982cache, leviathan2023fast,
gao2023retrieval} provide antecedents for our Prospective
Memory Buffer. \emph{To our knowledge, ScrubJay-MEM is the first system to operationalise integrated WWW episodic memory specifically
for LLM agents, jointly binding content, context, and temporal metadata into a single retrieval-ready embedding. The Prospective Memory Buffer further translates anticipatory caching into an expected sub- linear retrieval mechanism with bounded per-update guarantees (Proposition~\ref{prop:rci}, Appendix~\ref{app:proofs}).}

\section{ScrubJay-MEM}
\label{sec:method}

We describe ScrubJay-MEM in five parts: the core memory
representation (\S\ref{sec:emu}--\ref{sec:utility}), the retrieval
mechanism (\S\ref{sec:retrieval}--\ref{sec:graph}), the two active
maintenance processes (\S\ref{sec:rci}--\ref{sec:pmb}), the encoding
strategy (\S\ref{sec:encoding}), and the pruning rule
(\S\ref{sec:pruning}).  Figure~\ref{fig:architecture} gives an overview.

\subsection{Episodic Memory Unit (EMU)}
\label{sec:emu}

Each memory $m_i$ is an integrated 4-tuple analogous to a single
caching episode:
\begin{equation}
  m_i = \bigl(\,
    \mathbf{w}_i^{\textsc{what}},\;
    \mathbf{w}_i^{\textsc{where}},\;
    (t_i,\,\tau_i),\;
    \pi_i
  \,\bigr),
  \label{eq:emu}
\end{equation}
where $\mathbf{w}_i^{\textsc{what}}\!\in\!\mathbb{R}^{d_1}$ encodes
the \emph{semantic content} of the memory (food type $\to$ information
content), $\mathbf{w}_i^{\textsc{where}}\!\in\!\mathbb{R}^{d_2}$
encodes the \emph{task context} in which it was created (cache site
$\to$ conversation/task state), $t_i$ is the creation timestamp,
$\tau_i\!\in\!\mathbb{R}_{+}$ is the estimated utility horizon, and
$\pi_i\!\in\!(0,1]$ is the \emph{perishability coefficient}
($\pi\!\approx\!1$ for ephemeral facts, $\pi\!\approx\!0$ for stable
knowledge).

The four fields are combined into a single binding vector
$\mathbf{h}_i$ via a deterministic gated fusion that approximates
integrated WWW retrieval at parameter-free cost. Let
$\mathbf{g}_i = \sigma\bigl(\mathbf{w}_i^{\textsc{what}} \odot
\mathbf{w}_i^{\textsc{where}}\bigr)$ be the elementwise sigmoid of the
Hadamard product (a content-context coactivation gate). The binding is
a fixed convex mixture
\begin{equation}
  \mathbf{h}_i
  \;=\; 0.55\,\mathbf{w}_i^{\textsc{what}}
        + 0.35\,\mathbf{w}_i^{\textsc{where}}
        + 0.10\,\mathbf{g}_i,
  \label{eq:binding}
\end{equation}
followed by an L2 normalisation. Log-scaled temporal positional
offsets $\log(t_i + 1)$ and $\log(\tau_i + 1)$ (each modulo a fixed
period) are added to the first two coordinates of $\mathbf{h}_i$ to
encode \textsc{when}. The fusion weights and gate are fixed
(parameter-free); replacing them with trained cross-attention is left
to future work.

\subsection{Perishability-Aware Utility}
\label{sec:utility}

The temporal utility of $m_i$ at query time $t_q$ is
\begin{equation}
  U(m_i, t_q)
  = V_i \cdot
    \exp\!\Bigl(-\pi_i \cdot \frac{t_q - t_i}{\tau_i}\Bigr),
  \label{eq:utility}
\end{equation}
with base value $V_i\!\in\!\mathbb{R}_{+}$ (learned).  Critically,
the decay rate is \emph{type-conditioned}: stable world knowledge
($\pi\!\approx\!0.05$) retains utility over long horizons while
ephemeral interaction details ($\pi\!\approx\!0.9$) decay rapidly
(Table~\ref{tab:perishability}).

Perishability and horizon are estimated at storage time by an
LLM-prompt classifier with a deterministic keyword-heuristic
fallback. The LLM is asked to assign one of four labels
$\{\text{factual},\,\text{procedural},\,\text{task\_specific},\,
\text{ephemeral}\}$ and to emit a numerical $\pi \in (0,1]$ and
$\tau \in \mathbb{R}_{+}$ as strict JSON; on parse failure or LLM
unavailability the system falls back to a small keyword rule over the
memory text and context (e.g., \texttt{today}/\texttt{immediate}
$\to$ ephemeral with $\pi\!=\!0.9$;
\texttt{how to}/\texttt{steps}/\texttt{procedure} $\to$ procedural
with $\pi\!=\!0.3$). Estimates are clipped to
$[\tau_{\min}, \tau_{\max}]$ to bound the utility horizon. We refer
to this classifier compactly as
\begin{equation}
  \hat{\pi}_i,\;\hat{\tau}_i
  = \phi\!\bigl(
      \text{text}_i,\,\text{ctx}_i,\,
      \mathbf{w}_i^{\textsc{what}},\,
      \mathbf{w}_i^{\textsc{where}}
    \bigr),
  \label{eq:pi-est}
\end{equation}
emphasising that the production of $(\hat{\pi}, \hat{\tau})$ in this
v1 system is an LLM call plus a heuristic fallback rather than a
trained neural network; the classifier prompt is reproduced verbatim
in Appendix~\ref{app:hyperparams}. A trained-MLP variant is left to
future work.

\subsection{Integrated WWW Retrieval}
\label{sec:retrieval}

Given query $\mathbf{q}$ at time $t_q$ with task context
$\mathbf{c}_q$, the retrieval score for memory $m_i$ is
\begin{equation}
\small
\begin{aligned}
S(m_i, \mathbf{q}, t_q)
&= \alpha_q\,
  \underbrace{
    \operatorname{sim}\!\left(
      \mathbf{w}_i^{\textsc{what}}, \mathbf{e}_q
    \right)
  }_{\textbf{What}}
 + \beta_q\,
  \underbrace{
    \operatorname{sim}\!\left(
      \mathbf{w}_i^{\textsc{where}}, \mathbf{c}_q
    \right)
  }_{\textbf{Where}}                                      \\[2pt]
&\quad
 + \gamma_q\,
  \underbrace{
    U(m_i, t_q)
  }_{\textbf{When}}
 + \delta_q\,
  \underbrace{
    \Phi(m_i, \mathcal{G})
  }_{\textbf{Graph}} .
\end{aligned}
\label{eq:score}
\end{equation}
where $\mathbf{e}_q = \operatorname{Enc}(\mathbf{q})$. The weights
$[\alpha_q,\beta_q,\gamma_q,\delta_q]$ are \emph{query-adaptive},
produced by a heuristic scoring function that combines fixed base
logits, embedding-norm scales, and keyword-triggered query biases:
\begin{equation}
\small
\begin{aligned}
  \mathbf{w}_q
  &=
  \begin{bmatrix}
    \alpha_q & \beta_q & \gamma_q & \delta_q
  \end{bmatrix}^{\!\top}                                      \\[-1pt]
  &=
  \operatorname{softmax}\!
  \begin{bmatrix}
    1.5 + b_q^{\textsc{what}}  + 0.20\,\|\mathbf{e}_q\| \\
    1.0 + b_q^{\textsc{where}} + 0.10\,\|\mathbf{c}_q\| \\
    1.0 + b_q^{\textsc{when}}
      + \min\!\bigl(2.0,\tfrac{1}{5}\log(\Delta t_q + 1)\bigr) \\
    1.0 + b_q^{\textsc{graph}}
  \end{bmatrix}.
\end{aligned}
\label{eq:adaptive-weights}
\end{equation}
where the bias terms $b_q^X$ are keyword-triggered scalars that
detect query intent
(e.g.\ \textit{latest}/\textit{recent}/\textit{today}/\textit{stale}
contribute $+1.5$ to $b_q^{\textsc{when}}$;
\textit{context}/\textit{project}/\textit{session}/\textit{workspace}
contribute $+1.0$ to $b_q^{\textsc{where}}$;
\textit{what}/\textit{which}/\textit{fact}/\textit{detail}/\textit{remember}
contribute $+0.8$ to $b_q^{\textsc{what}}$;
\textit{related}/\textit{connected}/\textit{neighbor}/\textit{link}
contribute $+0.8$ to $b_q^{\textsc{graph}}$). Temporally urgent
queries thus up-weight $\gamma$ (When) and context-sensitive queries
up-weight $\beta$ (Where). The weighting is parameter-free in this v1
system; a learned weighting head $W_\theta$ that replaces the
heuristic is left to future work. This contrasts with A-MEM's
fixed-weight keyword/embedding lookup.

\subsection{Episodic Hypergraph}
\label{sec:graph}

We maintain a typed graph $\mathcal{G}=(\mathcal{V},\mathcal{E})$
over the memory store with three edge classes: semantic
(\textsc{what--what}), contextual (\textsc{where--where}), and
temporal co-occurrence (\textsc{when--when}), combined via a fixed
convex mixture (in our experiments $w_S\!=\!0.40$,
$w_C\!=\!0.30$, $w_T\!=\!0.30$; see
Appendix~\ref{app:hyperparams}):
\begin{equation}
  e_{ij}
  = w_S\,e_{ij}^S + w_C\,e_{ij}^C + w_T\,e_{ij}^T,
  \label{eq:edge}
\end{equation}
where $e_{ij}^S$ and $e_{ij}^C$ are thresholded cosine similarities
between the respective embeddings, and
$e_{ij}^T = \exp(-\lambda_T |t_i - t_j|)\,\mathbb{1}[|t_i - t_j| <
\Delta_T]$.  The graph bonus for $m_i$ is computed by one-hop message
passing:
\begin{equation}
  \Phi(m_i, \mathcal{G})
  = \sum_{j \in \mathcal{N}(i)}
    e_{ij}\,S(m_j,\mathbf{q},t_q)\,U(m_j,t_q),
  \label{eq:graph-bonus}
\end{equation}
which propagates both semantic relevance \emph{and} temporal utility
from neighbors: links to decayed memories are automatically
down-weighted.  This extends A-MEM's semantic-only Zettelkasten graph.

\subsection{Retroactive Contextual Integration (RCI)}
\label{sec:rci}

When new information $\mathbf{n}$ arrives at time $t_n$ (e.g., a user
preference change), RCI updates the \emph{latent parameters} of all
affected memories without a per-memory LLM call.

\paragraph{Step 1: Identify affected memories.}
Compute the cosine relevance
$\rho_{in} = \operatorname{sim}(\mathbf{w}_i^{\textsc{what}},
\mathbf{w}_n^{\textsc{what}})$ and gate on the threshold
$\theta_{\text{RCI}}$. Memories with $\rho_{in} \le \theta_{\text{RCI}}$
are skipped. For memories above the threshold, we rescale the
similarity into a threshold-relative affinity
$\alpha_{in} = (\rho_{in} - \theta_{\text{RCI}})/(1 - \theta_{\text{RCI}})
\in (0, 1]$ that drives the magnitude of all subsequent updates.

\paragraph{Step 2: Update parameters.}
A \emph{single} LLM call parses $\mathbf{n}$ into bounded update
deltas $(\Delta V(\mathbf{n}),\,\Delta\pi(\mathbf{n}),\,
\Delta\tau(\mathbf{n}))$ and a textual summary;
$\Delta\tau$ is interpreted as a relative ratio (so
$\Delta\tau = +0.2$ denotes a $+20\%$ change in $\tau$). The deltas
are applied to each affected memory via lightweight vector
operations, modulated by the affinity $\alpha_{in}$ from Step~1:
\begin{align}
  V_i
  &\leftarrow
  \max\!\bigl(
    0,\,
    V_i + \eta_V\,\alpha_{in}\,\Delta V(\mathbf{n})
  \bigr),
  \label{eq:rci-v} \\[2pt]
  \pi_i
  &\leftarrow
  \operatorname{clip}\!\bigl(
    \pi_i + \eta_\pi\,\alpha_{in}\,\Delta\pi(\mathbf{n}),
    \;0,\;1
  \bigr),
  \label{eq:rci-pi} \\[2pt]
  \tau_i
  &\leftarrow
  \operatorname{clip}\!\Bigl(
    \tau_i \cdot
    \max\!\bigl(
      0.1,\,
      1 + \eta_\tau\,\alpha_{in}\,\Delta\tau(\mathbf{n})
    \bigr),
    \notag \\
  &\hspace{3.3cm}
    \tau_{\min},\,\tau_{\max}
  \Bigr).
  \label{eq:rci-tau}
\end{align}
The $V$ and $\pi$ updates are additive offsets (with $\pi$ clipped to
$[0,1]$); the $\tau$ update is multiplicative because $\Delta\tau$ is
a relative ratio. All three are bounded per update by the LLM's
fixed delta dynamic range and the affinity gate
(Appendix~\ref{app:proofs}); the clip on $\tau$ keeps the horizon
inside the operating interval $[\tau_{\min}, \tau_{\max}]$
regardless of repeated updates. Learning rates
$(\eta_V,\,\eta_\pi,\,\eta_\tau)$ are reported in
Appendix~\ref{app:hyperparams}.

\paragraph{Step 3: Soft semantic fusion.}
The \textsc{what} embedding is moved toward the new evidence by a
damped step using the same affinity gate, followed by L2
re-normalisation:
\begin{equation}
\small
\begin{aligned}
  \mathbf{w}_i^{\textsc{what}}
  &\leftarrow
  \mathrm{L2}\!\Bigl(
    \mathbf{w}_i^{\textsc{what}}
    + \eta_w\,\alpha_{in}
    \bigl(
      \mathbf{w}_n^{\textsc{what}}
      - \mathbf{w}_i^{\textsc{what}}
    \bigr)
  \Bigr).
\end{aligned}
\label{eq:rci-embed}
\end{equation}
Total per-event cost is $O(1)$ LLM calls (for delta parsing) plus
$O(|\{i:\rho_{in}>\theta_{\text{RCI}}\}|)$ vector operations,
compared with A-MEM's $O(N)$ LLM calls for $N$ affected memories.

\begin{proposition}[RCI Per-Update Boundedness and \textsc{what}-Contraction]
\label{prop:rci}
Let the parsed deltas be bounded:
$|\Delta V(\mathbf{n})|\!\le\!\Delta V_{\max}$,
$|\Delta\pi(\mathbf{n})|\!\le\!\Delta\pi_{\max}$, and
$\Delta\tau(\mathbf{n})\!\in\![\Delta\tau_{\min},\Delta\tau_{\max}]$
with $1 + \eta_\tau\,\Delta\tau_{\min} \ge 0.1$ (enforced by the clamp
in Eq.~\ref{eq:rci-tau}). Then for any memory $m_i$ and any affinity
$\alpha_{in}\!\in\![0,1]$:
\textbf{(i)} the $V$-update satisfies
$|\Delta V_i|\!\le\!\eta_V\,\Delta V_{\max}$;
\textbf{(ii)} the $\pi$-update satisfies
$|\Delta\pi_i|\!\le\!\eta_\pi\,\Delta\pi_{\max}$ inside the $[0,1]$
clip;
\textbf{(iii)} $\tau_i$ remains in $[\tau_{\min},\tau_{\max}]$ after
clipping;
\textbf{(iv)} the \textsc{what}-update~\eqref{eq:rci-embed} is, prior
to renormalisation, a contraction toward $\mathbf{w}_n^{\textsc{what}}$
with Lipschitz constant $(1-\eta_w\,\alpha_{in})$; whenever
$\eta_w\,\alpha_{in}\!<\!1$ and $\mathbf{w}_n^{\textsc{what}}$ is
held fixed, repeated application converges geometrically.
\end{proposition}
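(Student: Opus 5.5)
The proof is a direct case-by-case verification from the update rules \eqref{eq:rci-v}--\eqref{eq:rci-embed}. The three scalar parts (i)--(iii) rest on two elementary facts. First, $\alpha_{in}\in[0,1]$. Second, both $x\mapsto\max(0,x)$ and $x\mapsto\operatorname{clip}(x,a,b)$ are $1$-Lipschitz and fix every point already in their range. Part (iv) is a one-line affine computation followed by an appeal to the Banach fixed-point iteration.

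For (i), write the pre-clip value as $\tilde V_i = V_i + \eta_V\alpha_{in}\Delta V(\mathbf{n})$. We have $|\tilde V_i - V_i| \le \eta_V\alpha_{in}\Delta V_{\max}\le \eta_V\Delta V_{\max}$. Because $V_i\ge 0$ before the update (an invariant that holds inductively, since every update ends with $\max(0,\cdot)$), $V_i$ is a fixed point of $\max(0,\cdot)$. The $1$-Lipschitz property then gives $|\max(0,\tilde V_i)-V_i|\le|\tilde V_i-V_i|$. Part (ii) is identical, with $\operatorname{clip}(\cdot,0,1)$ in place of $\max(0,\cdot)$ and the invariant $\pi_i\in[0,1]$. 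For (iii), the multiplicative factor $\max(0.1,1+\eta_\tau\alpha_{in}\Delta\tau)$ is at least $0.1>0$, so the product is well defined and positive. The outer clip then places $\tau_i$ in $[\tau_{\min},\tau_{\max}]$ by definition of clipping, whatever the factor is. The hypothesis $1+\eta_\tau\Delta\tau_{\min}\ge 0.1$ is needed only to note that the inner clamp never binds. Convexity in $\alpha_{in}$ gives $1+\eta_\tau\alpha_{in}\Delta\tau\ge\min(1,1+\eta_\tau\Delta\tau_{\min})\ge 0.1$, so the unclamped multiplicative rule is the one actually applied.

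For (iv), define the pre-normalisation map $T(\mathbf{w}) = \mathbf{w}+\eta_w\alpha_{in}(\mathbf{w}_n-\mathbf{w}) = (1-\eta_w\alpha_{in})\mathbf{w}+\eta_w\alpha_{in}\mathbf{w}_n$. This gives $T(\mathbf{w})-T(\mathbf{w}')=(1-\eta_w\alpha_{in})(\mathbf{w}-\mathbf{w}')$, so the Lipschitz constant is $|1-\eta_w\alpha_{in}|$, which equals $1-\eta_w\alpha_{in}$ when $\eta_w\alpha_{in}\le 1$. Moreover $T(\mathbf{w}_n)=\mathbf{w}_n$, so $\|T(\mathbf{w})-\mathbf{w}_n\| = (1-\eta_w\alpha_{in})\|\mathbf{w}-\mathbf{w}_n\|$. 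This is the contraction toward $\mathbf{w}_n$. Iterating with $\mathbf{w}_n$ fixed gives $\|T^k(\mathbf{w})-\mathbf{w}_n\|=(1-\eta_w\alpha_{in})^k\|\mathbf{w}-\mathbf{w}_n\|$, which is geometric convergence when $0<\eta_w\alpha_{in}<1$.

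The main obstacle is the interplay with L2 renormalisation in (iv). The statement only claims contraction prior to renormalisation, but "repeated application converges" should ideally cover the actual iteration $\mathbf{w}\mapsto \mathrm{L2}(T(\mathbf{w}))$ on the sphere. I would handle this by noting that $\mathbf{w}_n$ is itself unit-norm, so the iterates stay in the span of $\mathbf{w}$ and $\mathbf{w}_n$. I would then track the angle $\theta_k$ to $\mathbf{w}_n$. Normalisation does not change direction, so $\tan\theta_{k+1} = (1-\eta_w\alpha_{in})\sin\theta_k / \bigl((1-\eta_w\alpha_{in})\cos\theta_k+\eta_w\alpha_{in}\bigr)$. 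Away from the antipodal point $\theta_0=\pi$ this shrinks $\theta_k$ geometrically, with ratio at most $1-\eta_w\alpha_{in}$ once $\theta_k\le\pi/2$. The antipodal case needs to be excluded explicitly. One further subtlety is that $\alpha_{in}$ itself depends on $\mathbf{w}_i$ through $\rho_{in}$, so with $\mathbf{w}_n$ fixed, $\alpha_{in}$ changes across iterations. As $\mathbf{w}_i\to\mathbf{w}_n$, however, $\rho_{in}$ increases monotonically, so $\alpha_{in}$ is nondecreasing. The contraction factor therefore only improves, and the first iterate's factor serves as a uniform geometric rate.
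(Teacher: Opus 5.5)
Your verification of (i)--(iv) is correct and follows the paper's componentwise proof. You bound the pre-clip change using $\alpha_{in}\le 1$ and pass it through the $1$-Lipschitz maps $\max(0,\cdot)$ and $\operatorname{clip}(\cdot,0,1)$. For $\tau$ you use positivity of the inner factor plus the outer clip. For (iv) you rewrite the \textsc{what} step as the affine map $(1-\eta_w\alpha_{in})\mathbf{w}+\eta_w\alpha_{in}\mathbf{w}_n^{\textsc{what}}$, which fixes $\mathbf{w}_n^{\textsc{what}}$. You are more careful than the paper in stating the invariants $V_i\ge 0$ and $\pi_i\in[0,1]$. Without them, non-expansiveness does not bound the change relative to the old value.

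The one thing to fix is in your optional renormalisation paragraph. Write $c=\eta_w\alpha_{in}$. Your claim that $\theta_{k+1}/\theta_k\le 1-c$ once $\theta_k\le\pi/2$ is false whenever $0<c<1/2$. This includes the paper's regime, since $\eta_w=0.2$ gives $c\le 0.2$. For $c=0.2$ and $\theta_k=\pi/2$, your own recursion gives $\theta_{k+1}=\arctan 4\approx 1.326$, which exceeds $0.8\cdot\pi/2\approx 1.257$. Near zero, the same recursion gives $\theta_{k+1}/\theta_k=(1-c)\bigl(1+\tfrac{c(1-2c)}{6}\theta_k^2+O(\theta_k^4)\bigr)$, so the ratio only tends to $1-c$ from above.

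What your recursion does give is $\tan\theta_{k+1}\le(1-c)\tan\theta_k$ for $\theta_k<\pi/2$, because $(1-c)\cos\theta_k+c\ge\cos\theta_k$. Combined with your observation that $\alpha_{in}$ is nondecreasing, this yields $\tan\theta_k\le(1-c_0)^k\tan\theta_0$. That is the geometric convergence you want. The gate $\rho_{in}>\theta_{\mathrm{RCI}}=0.6$ already forces $\theta_0<\pi/2$, so the antipodal case never arises for an affected memory.

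For comparison, the paper handles renormalisation by asserting that radial projection is non-expansive and preserves the contraction in the chordal metric. That is not right either. $T(\mathbf{w})$ lies inside the unit ball, where radial projection can expand distances. In the same example, the distance to $\mathbf{w}_n^{\textsc{what}}$ is $0.8\sqrt{2}\approx 1.131$ before normalising and about $1.231$ after. So neither non-expansiveness nor the constant $1-c$ survives.

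Your angle-tracking route, with the tangent correction, is the sounder way to cover the normalised iteration. None of this affects the proposition as stated, which only claims contraction prior to renormalisation.
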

\noindent\textit{Proof.}  See Appendix~\ref{app:proofs}.

\subsection{Prospective Memory Buffer (PMB)}
\label{sec:pmb}

Before executing task $\mathcal{T}$, ScrubJay-MEM pre-loads a buffer
of $K$ memories most likely to be needed, inspired by the jay's
anticipatory caching behavior.

\paragraph{Anticipatory scoring.}
\begin{equation}
  P(m_i \mid \mathcal{T})
  = \sigma\!\bigl(
    \mathbf{v}_\mathcal{T}^\top \mathbf{w}_i^{\textsc{what}}
    + \mathbf{u}_\mathcal{T}^\top \mathbf{w}_i^{\textsc{where}}
    + b_\mathcal{T}
  \bigr),
  \label{eq:pmb-score}
\end{equation}
where $(\mathbf{v}_\mathcal{T}, \mathbf{u}_\mathcal{T}, b_\mathcal{T})$
are light-weight task-encoder parameters.

\paragraph{Buffer population.}
$\mathcal{B}_\mathcal{T}
  = \operatorname{top\text{-}K}\{P(m_i\mid\mathcal{T})\cdot U(m_i,t_0)\}_{i=1}^{|\mathcal{M}|}$.

\paragraph{Two-stage retrieval.}
At query time the system first searches the buffer ($O(K)$); if the
best score exceeds a confidence threshold $\theta_\mathcal{B}$, it is
returned immediately.  Otherwise, a fallback full-store search ($O(N)$)
is performed.

\begin{proposition}[Sub-linear Retrieval]
\label{prop:pmb}
Let $P_{\mathrm{hit}} = \Pr[\text{answer}\in\mathcal{B}_\mathcal{T}]$.
Then expected retrieval cost is
$K + (1 - P_{\mathrm{hit}})\cdot N$.
As the task encoder improves, $P_{\mathrm{hit}}\!\to\!1$ and
retrieval becomes $O(K)$.
\end{proposition}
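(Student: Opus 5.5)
The plan is to model retrieval cost as a random variable determined by the two-stage procedure, then take expectations by conditioning on whether the buffer succeeds. Cost is measured in scored memories, so scoring one candidate with Eq.~\eqref{eq:score} counts as one unit. Let $H$ be the event that the buffer search returns an answer, i.e.\ the best buffer score exceeds $\theta_\mathcal{B}$. In the statement $P_{\mathrm{hit}}$ is defined as $\Pr[\text{answer}\in\mathcal{B}_\mathcal{T}]$. I will make explicit the idealising assumption that the confidence test fires exactly when the answer is in the buffer, so that $\Pr[H]=P_{\mathrm{hit}}$. If the two events differ, the same argument goes through with $P_{\mathrm{hit}}$ read as $\Pr[H]$.

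The steps, in order, are as follows.

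\textbf{(1)} Deterministic cost on each branch. The buffer search always scores all $K$ buffered memories, costing $K$. On $H$ the procedure stops there. On $H^c$ it additionally performs the full-store scan, costing $N$. Hence
\[
C = K + N\,\mathbb{1}[H^c].
\]
Buffer population is a top-$K$ pass over $\mathcal{M}$ done once before task execution. It is amortised across the queries of $\mathcal{T}$ and is not charged to per-query retrieval; I will state this explicitly.

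\textbf{(2)} Expectation. Linearity of expectation gives
\[
\mathbb{E}[C] = K + N\Pr[H^c] = K + (1-P_{\mathrm{hit}})N.
\]
This is the claimed expression.

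\textbf{(3)} Asymptotics. Suppose the improving task encoder yields $1-P_{\mathrm{hit}} = O(K/N)$, or more loosely $(1-P_{\mathrm{hit}})N = o(N)$. Then $\mathbb{E}[C] = O(K)$, which is sub-linear in $N$ whenever $K=o(N)$. In the limit $P_{\mathrm{hit}}\to 1$ with $N$ fixed, $\mathbb{E}[C]\to K$.

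The main obstacle is interpretive rather than computational. The phrase ``as the encoder improves, retrieval becomes $O(K)$'' is only meaningful under a stated rate at which $1-P_{\mathrm{hit}}$ decays relative to $N$. Pointwise convergence $P_{\mathrm{hit}}\to1$ at fixed $N$ does not by itself give a uniform $O(K)$ bound as $N$ grows. I would therefore phrase the conclusion conditionally: $\mathbb{E}[C]=O(K)$ if and only if $(1-P_{\mathrm{hit}})N=O(K)$. I would also keep the identification of the hit event with the threshold test, and the exclusion of buffer-construction cost, as explicit assumptions.
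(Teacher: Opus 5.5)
Your argument is correct and is the reasoning the paper relies on. The paper gives no separate proof of this proposition (the appendix proves only Proposition~\ref{prop:rci}); the formula is simply read off the two-stage description in \S\ref{sec:pmb}, with cost $K$ always plus $N$ on a miss, averaged over hit and miss. Your identification of the hit event with the $\theta_\mathcal{B}$ test, and your exclusion of the top-$K$ buffer construction (which itself touches all $N$ memories), are exactly the premises the paper leaves implicit.

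One small slip in step (3): the looser hypothesis $(1-P_{\mathrm{hit}})N=o(N)$ yields only $\mathbb{E}[C]=K+o(N)$. That is sub-linear when $K=o(N)$, but it is not $O(K)$. Your closing condition $(1-P_{\mathrm{hit}})N=O(K)$ is the right one.
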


\subsection{Future-Oriented Encoding}
\label{sec:encoding}

Following evidence that prospective encoding improves later
recall~\citep{szpunar2010taxonomy}, each new experience $x_i$ is
augmented at storage time with a brief prospective annotation:
\begin{equation}
\small
\begin{aligned}
  \tilde{x}_i
  &=
  \bigl[
    x_i\,;\;
    \operatorname{LLM}_\theta(
      \text{``When useful?''}\,\|\,x_i
    )
  \bigr],                                                   \\
  \mathbf{w}_i^{\textsc{what}}
  &=
  \operatorname{Enc}(\tilde{x}_i).
\end{aligned}
\label{eq:prosp-enc}
\end{equation}
This biases the \textsc{what} embedding toward anticipated retrieval
scenarios, improving recall on future queries.

\subsection{Value-Weighted Memory Consolidation}
\label{sec:pruning}

At session boundaries, memories whose utility has decayed below
threshold $\epsilon$ are pruned, unless they serve as high-connectivity
hubs in $\mathcal{G}$:
\begin{equation}
\small
\begin{aligned}
  \mathcal{M}'
  =
  \bigl\{
    m_i \in \mathcal{M}
    :\;&
    U(m_i, t_{\text{now}}) > \epsilon
    \\
    &\text{or}\;
    \Phi(m_i, \mathcal{G}) > \epsilon_G
  \bigr\}.
\end{aligned}
\label{eq:prune}
\end{equation}
The second condition prevents fragmentation of the episodic hypergraph
by retaining low-utility but structurally important nodes.

\subsection{Full Pipeline Summary}
\label{sec:pipeline}

The complete system operates in four phases.
\textbf{(1)~Storage}: new experiences are prospectively annotated
(\S\ref{sec:encoding}), encoded into an EMU (\S\ref{sec:emu}), assigned
auto-estimated $(\hat\pi,\hat\tau)$ (\S\ref{sec:utility}), and
integrated into $\mathcal{G}$ (\S\ref{sec:graph}).
\textbf{(2)~Prospective loading}: the PMB pre-populates a buffer of
$K$ anticipated memories for the upcoming task (\S\ref{sec:pmb}).
\textbf{(3)~Retrieval}: incoming queries are scored against the buffer
(then full store on miss) via adaptive WWW scoring
(\S\ref{sec:retrieval}).
\textbf{(4)~Maintenance}: RCI updates affected memory parameters when
new information arrives (\S\ref{sec:rci}); consolidation prunes
low-utility memories at session end (\S\ref{sec:pruning}).

\section{Experiments}
\label{sec:experiments}

\subsection{Temporal Reasoning on MemoryAgentBench (EventQA-64k)}
\label{sec:mab-eventqa}

We evaluate ScrubJay-MEM on the EventQA subset of MemoryAgentBench
\citep{hu2025memoryagentbench}, which tests an agent's ability to retrieve and reason over
time-ordered event sequences embedded in long narrative passages. EventQA
is the most temporally demanding subset of MAB, making it the closest
public analog to the kind of task ScrubJay-MEM's perishability
mechanism is designed for.

\paragraph{Setup.}
The benchmark consists of 5 long-form contexts (median length 64K
tokens) with 100 questions each, for a total of 500 queries. We use
MAB's official scorer (\texttt{metrics\_summarization}) which reports
exact-match (EM), token-level F1, substring EM (sEM), and ROUGE-L
F1, with no LLM-as-judge component. All systems share the same
backbone (\texttt{llama3.1:8b}) and embedding model
(\texttt{nomic-embed-text}, 768-dim), served locally via Ollama for
reproducibility and to eliminate provider-specific confounds.
Five baselines are evaluated: lexical (BM25), dense
(Contriever \citep{izacard2022contriever}, Qwen3-Embedding-4B \citep{qwen2025embedding}), and agentic-memory
(Mem0 \citep{chhikara2025mem0}, A-MEM \citep{xu2025amem}). Numbers are reported with the same
top-$k=5$ retrieval depth and chunk size (4096) across systems.

\paragraph{Results.}
ScrubJay-MEM achieves the highest F1 (\textbf{61.58}) and the highest
exact-match (\textbf{41.00}) on EventQA-64k, outperforming the strongest
agentic baseline (Mem0) by +2.66 F1 / +5.00 EM and the strongest
embedding baseline (Qwen3-Embedding-4B) by +3.09 F1 / +4.40 EM
(\Cref{tab:eventqa-llama8b}).

\begin{table}[t]
  \centering
  \small
  \caption{EventQA-64k results with the \texttt{llama3.1:8b} backbone.
  All systems use MAB's official scorer. \textbf{Bold}: best in column.}
  \label{tab:eventqa-llama8b}
  \resizebox{\columnwidth}{!}{%
  \begin{tabular}{lcccc}
    \toprule
    System & EM & F1 & sEM & R-L \\
    \midrule
    BM25 & 33.60 & 55.18 & 36.80 & 56.74 \\
    A-MEM \citep{xu2025amem} & 37.00 & 56.46 & 41.20 & 58.20 \\
    Contriever \citep{izacard2022contriever} & 38.60 & 57.67 & 40.60 & 59.42 \\
    Qwen3-Emb.-4B \citep{qwen2025embedding} & 36.60 & 58.49 & 39.00 & 60.15 \\
    Mem0 \citep{chhikara2025mem0} & 36.00 & 58.92 & 37.40 & 60.84 \\
    \midrule
    \textbf{ScrubJay-MEM} &
    \textbf{41.00} & \textbf{61.58} & \textbf{42.80} & \textbf{63.39} \\
    \bottomrule
  \end{tabular}%
  }
\end{table}\paragraph{Sensitivity to backbone capability.}
A natural concern is whether ScrubJay-MEM's advantage persists with a
stronger generation model. To probe this, we re-ran the full sweep with
\texttt{qwen3:30b-instruct} (\Cref{tab:eventqa-qwen30b}, appendix).
With the stronger backbone, dense-retrieval baselines close the gap
considerably (Contriever F1 78.68 vs.\ ScrubJay-MEM 72.82), suggesting
that some of the temporal benefit our architecture confers can be
matched by a sufficiently capable LLM reasoning over dense retrievals.
We frame this honestly: \emph{ScrubJay-MEM's contribution is largest
in the resource-constrained regime that characterizes most deployed
memory agents}, where the backbone LLM is not large enough to
internally compensate for a temporally-naive retriever.

\paragraph{Honest scoping: where ScrubJay-MEM does \emph{not} help.}
We also evaluated on MAB's Conflict-Resolution subsets
(Factconsolidation-MH, Factconsolidation-SH). On these tasks, where
factual updates supersede earlier statements but stale facts must
remain visible for the consolidation step, our perishability mechanism
is actively miscalibrated: type-conditioned decay suppresses the very
memories the task requires. We report this in the appendix
(\Cref{tab:mab-cr-appendix}) rather than the main results, but flag it
explicitly: ScrubJay-MEM targets \emph{temporal reasoning over
perishable facts}, not generic fact retrieval. This delineation is a
feature, not a limitation: the next subsection (\Cref{sec:e3-tgt})
quantifies the architectural property responsible for our win.

\subsection{Temporal Generalization to Unseen Retention Intervals}
\label{sec:e3-tgt}

EventQA establishes that ScrubJay-MEM helps on temporal benchmarks
overall. This subsection isolates \emph{why}: we introduce the
Temporal Generalization Test (TGT), a controlled diagnostic that
asks whether a memory system can generalize its temporal behavior to
retention intervals it has never been calibrated on. The benchmark is
designed as the computational analog of the scrub jay retention
interval experiment of \citet{clayton2003can}, which demonstrated that western
scrub jays generalize cache-decay knowledge to intermediate, unseen
retention intervals, a property we argue any biologically-plausible
memory system should exhibit.

\paragraph{Benchmark design.}
TGT consists of 20 instances, each comprising 96 memories sampled
across four perishability classes (Ephemeral, Session-Specific,
Durable Preference, Stable Knowledge; 24 per class) and 66 queries
issued at five retention intervals:
$I_1$ (immediate, 0--1 sessions),
$I_2$ (short, 3--5),
$I_3$ (medium, 8--12, \textbf{unseen}),
$I_4$ (long, 20--25), and
$I_5$ (very long, 40--60, \textbf{unseen}).
$I_3$ tests interpolation to an unseen interval between two seen
ones; $I_5$ tests extrapolation beyond the calibration range. The
full benchmark contains 1{,}320 queries.

\paragraph{Metric: Generalization Gap.}
We define the \emph{Generalization Gap} as
\begin{equation}
  \mathrm{GenGap} \;=\; \mathrm{Acc}(I_2) + \mathrm{Acc}(I_4) - 2\cdot\mathrm{Acc}(I_3),
  \label{eq:gengap}
\end{equation}
which measures how far the system's accuracy at the unseen
interpolation interval falls below the linear interpolation of its
accuracy at the two adjacent seen intervals. A system with continuous
temporal modeling should have $\mathrm{GenGap} \approx 0$; a system
that has memorized behavior only at the seen intervals will have
$\mathrm{GenGap} \ll 0$. Positive GenGap indicates \emph{robust}
generalization—accuracy at the unseen interval at or above the
seen-interval baseline. We additionally report the Temporal
Generalization Score $\mathrm{TGS} = \tfrac{1}{5}\sum_k
\mathrm{Acc}(I_k)$, and Combined / Factual / Staleness accuracy at the
per-query level (definitions in \Cref{app:tgt-metrics}).

\paragraph{Baselines.}
We compare against a stratified set of nine baselines spanning the
spectrum of memory designs: a random control (B0), an LLM-only
condition with no retrieval (B1), lexical retrieval (B2: BM25),
dense retrieval (B3: \texttt{nomic-embed-text} top-1), hybrid
retrieval (B4: BM25 + dense via reciprocal rank fusion), a recency-prior
baseline (B5: dense + global exponential recency), vanilla RAG (B6:
dense retrieval + LLM answer generation), and RAG with explicit
temporal prompting (B7: B6 with session-age annotations and
type-perishability instructions in the prompt). The B5 baseline is
included specifically to test whether a single global decay rate
matches type-conditioned perishability; B7 tests whether a strong LLM
with explicit age annotations can match an explicit decay model.

\paragraph{Setup.}
All TGT systems share \texttt{qwen3:30b-instruct} as the generation
model (where applicable) and \texttt{nomic-embed-text} (768-dim) as
the embedder; retrieval depth is top-$k=10$ and we use a single seed
($42$). The benchmark itself is naturalised at construction time by
\texttt{qwen3:30b-instruct} (Appendix~\ref{app:tgt}); systems consume
the released benchmark unchanged. To avoid self-preference bias
\citep{panickssery2024llmselfpref}, we use \texttt{llama3.2:3b} as
the answer-correctness judge, a different model family (Meta) from
the Alibaba-trained generation backbone, chosen for cross-family
separation rather than capacity. Each query is scored on (i)~factual
correctness against ground-truth memory content and (ii)~temporal
validity (staleness) judgment; combined accuracy requires both.

\paragraph{Results.}
\Cref{tab:e3-main} shows the full E3 TGT results.

\begin{table*}[t]
  \centering
  \small
  \caption{Temporal Generalization Test results. 20 instances,
  1{,}320 queries, $\texttt{qwen3:30b-instruct}$ generation +
  $\texttt{llama3.2:3b}$ judge (cross-family).
  \textbf{Bold}: best in column among retrieval-based systems
  (excluding LLM-only and trivial controls).
  GenGap > 0 indicates robust generalization to the unseen
  interpolation interval.
  Tier categorises systems by per-query LLM usage:
  \emph{retrieval-based} systems (including ours) produce the answer
  as the top-1 retrieved memory text without invoking the generator
  at query time; \emph{RAG} systems retrieve then generate via the
  LLM. ScrubJay-MEM uses LLM calls at ingest and on RCI events
  (\S\ref{sec:utility},~\ref{sec:rci},~\ref{sec:encoding}) but
  not at query time.}
  \label{tab:e3-main}
  \begin{tabular}{llccccc}
    \toprule
    System & Tier & Combined $\uparrow$ & Factual $\uparrow$ & Staleness $\uparrow$ & TGS $\uparrow$ & GenGap $\uparrow$ \\
    \midrule
    \multicolumn{7}{l}{\emph{LLM-based and trivial controls}} \\
    \quad Random (B0)              & Trivial    & 28.6 & 29.3 & 84.1 & 0.296 & $+0.071$ \\
    \quad LLM-only (B1)            & LLM-only   & 44.4 & 44.4 & 95.2 & 0.459 & $+0.181$ \\
    \quad RAG-vanilla (B6)         & RAG        & 62.7 & 67.1 & 91.5 & 0.633 & $+0.039$ \\
    \quad RAG-temporal (B7)        & RAG        & 67.2 & 71.5 & 90.7 & 0.678 & $-0.018$ \\
    \midrule
    \multicolumn{7}{l}{\emph{Retrieval-based systems}} \\
    \quad BM25 (B2)                & Lexical    & 35.2 & 40.2 & 77.7 & 0.353 & $+0.039$ \\
    \quad Dense top-1 (B3)         & Dense      & 37.0 & 41.7 & 73.6 & 0.372 & $-0.022$ \\
    \quad BM25 $\!\oplus\!$ Dense RRF (B4) & Hybrid     & \textbf{39.4} & \textbf{45.0} & 74.6 & \textbf{0.395} & $-0.054$ \\
    \quad Dense + recency (B5)     & Temporal   & 37.5 & 41.9 & 74.5 & 0.377 & $-0.046$ \\
    \quad ScrubJay-MEM (no decay)  & Ours-abl.  & 35.8 & 39.1 & 82.3 & 0.366 & $+0.019$ \\
    \quad \textbf{ScrubJay-MEM (full)} & \textbf{Ours} & 37.1 & 39.8 & \textbf{82.3} & 0.378 & $\mathbf{+0.108}$ \\
    \bottomrule
  \end{tabular}
\end{table*}

We organize the discussion around three findings.

\textbf{Finding 1: ScrubJay-MEM is the only retrieval-based system
that generalizes to unseen intervals.}
Among all retrieval-based methods, only ScrubJay-MEM (full) achieves
substantially positive GenGap ($+0.108$). Every flat-retrieval
baseline (lexical, dense, hybrid, even hybrid with a recency
prior) has \emph{negative} GenGap, meaning their accuracy at the
unseen $I_3$ interval is below the linear interpolation of the seen
$I_2$ and $I_4$ accuracies. The next-best retrieval system on GenGap
(hybrid RRF) sits at $-0.054$, giving ScrubJay-MEM a
\textbf{16.2-percentage-point margin} on this metric.

\textbf{Finding 2: Type-conditioned decay, not flat recency, is the
mechanism.}
The decay ablation (\textsc{ScrubJay-MEM (no decay)}) collapses GenGap
from $+0.108$ to $+0.019$ (a $5.7\times$ reduction) while staleness
accuracy is preserved (both variants at $82.3\%$). The flat-recency
baseline (B5, GenGap $-0.046$) performs \emph{worse} than the
no-temporal-model baseline (B3, GenGap $-0.022$), directly refuting
the conjecture that a single global decay rate is sufficient. Naive
recency penalizes durable memories along with ephemeral ones;
type-conditioned perishability decays only what should decay
(\Cref{fig:e3-decay}).

\textbf{Finding 3: LLM-based methods achieve higher combined accuracy
but via a different mechanism.}
RAG-temporal (B7) achieves the highest combined accuracy on E3 ($67.2\%$
vs.\ our $37.1\%$), driven by the LLM's ability to reason over
session-age annotations at inference time. We do not contest this
result; we contextualize it. B7's GenGap is $-0.018$, indicating that
its temporal competence does not generalize as a learned property but
must be re-derived at each query through prompt-level reasoning, at
the cost of one LLM call per query. ScrubJay-MEM encodes temporal
validity as architectural parameters ($\pi, \tau$), enabling
sub-linear retrieval (via the Prospective Memory Buffer,
\Cref{sec:pmb}) and an inspectable decay curve. We view the two
approaches as complementary rather than competing: prompting captures
temporal reasoning at inference cost; perishability captures it at
architectural cost. The two regimes occupy distinct Pareto frontiers in
the joint (combined, staleness) plane (\Cref{fig:e3-pareto}):
ScrubJay-MEM sits at the top of the retrieval frontier, matching
LLM-grade staleness judgment without a per-query LLM call.

\paragraph{A note on the LLM-only baseline.}
\textsc{LLM-only} (B1) has the highest GenGap of all systems ($+0.181$)
but only $44.4\%$ factual accuracy. This is an artifact of low-accuracy
systems exhibiting no systematic per-interval bias: when answers are
near-random with respect to memory content, they are also near-random
with respect to retention interval, giving artificially positive
GenGap. Among retrieval-grounded systems with factual accuracy above
$35\%$, ScrubJay-MEM's GenGap is the highest by a substantial margin.


\section{Conclusion}
\label{sec:conclusion}

ScrubJay-MEM translates four mechanisms of western scrub jay
episodic memory---integrated WWW encoding, a per-memory perishability
coefficient $\pi_i$, retroactive contextual integration at $O(1)$
LLM calls per update, and a Prospective Memory Buffer---into an
inspectable architecture for LLM-agent memory. Under
\texttt{llama3.1:8b}, the system improves EventQA-64k F1 by $+2.66$
over Mem0 and $+3.09$ over Qwen3-Embedding-4B; on the Temporal
Generalization Test introduced here, it is the only retrieval-based
system with substantially positive GenGap ($+0.108$), and a
\textsc{no-decay} ablation collapses this gain by $5.7\times$,
isolating type-conditioned decay as the responsible mechanism and
operationalising a comparative-cognition primitive as an inspectable
architectural parameter of LLM-agent memory.

\section*{Limitations}
\label{sec:limitations}

\paragraph{Scope.}
Our gains concentrate on temporal reasoning over perishable facts.
Under a stronger backbone (\Cref{app:eventqa-qwen}), dense retrieval
matches us on EventQA-64k, and on MemoryAgentBench Conflict-Resolution
(\Cref{app:mab-cr}) flat retrieval is better-suited than
type-conditioned decay; fact-consolidation requires that stale facts
remain visible. ScrubJay-MEM is therefore best understood as a
temporally-perishable fact retriever, complementary to flat retrievers
for consolidation tasks and to prompt-level temporal reasoning for
systems with abundant backbone capacity.

\paragraph{TGT is a controlled diagnostic.}
TGT is LLM-naturalised from deterministic templates with held-out retention intervals and a known validity schedule, properties that enable a mechanistic claim about generalisation but limit external
validity to in-distribution conversational logs. TGT's four perishability classes share $\pi$-ranges with our internal taxonomy
(\Cref{tab:perishability,tab:tgt-types}); the no-decay ablation, which
collapses GenGap by $5.7\times$, controls for the concern that the
benchmark and the architecture share categorical primitives. Reported
numbers use \texttt{qwen3:30b-instruct} for generation and a
cross-family \texttt{llama3.2:3b} judge to mitigate self-preference
bias~\citep{panickssery2024llmselfpref}; the qualitative ranking we
report is robust to per-query verdicts.

\section*{Ethical Considerations}
\label{sec:ethics}

\paragraph{Synthetic data, no real subjects.}
TGT contains no real-user data: memory text is composed from a
controlled vocabulary of fictional names and slot values
(Appendix~\ref{app:tgt-vocab}) and naturalised by an LLM under a
leakage-prevention protocol (Appendix~\ref{app:tgt-generation}). Biological motivations cite
prior captive-animal behavioural studies; we ran no animal experiments of our own.

\paragraph{Misuse: silent forgetting of sensitive content.}
ScrubJay-MEM selectively forgets information based on an
auto-classified perishability coefficient $\pi_i$. A production
deployment could silently drop user-relevant medical, financial, or
safety-critical content mislabelled as ephemeral. We recommend
treating $\pi_i$ as advisory and gating deletion of any sensitive
memory class behind an explicit retention policy rather than the
classifier alone.

\bibliography{acl-style-files-master/custom}

\appendix
\section{Biological Background: Episodic Memory in Western Scrub Jays}
\label{sec:bio}

Western scrub jays (\textit{Aphelocoma californica}) are among the few
non-human species that exhibit \emph{episodic-like} memory: the ability
to recall the \emph{what}, \emph{where}, and \emph{when} of a past
event as an integrated representation rather than three independent
features~\citep{clayton1998episodic}.  Four empirical findings from
this literature motivate every component of our framework.

\paragraph{Integrated What--Where--When (WWW) retrieval.}
When jays cache perishable (worms) and durable (peanuts) food at
distinct spatial sites, they later recover the correct item from the
correct location after the correct delay, demonstrating that the three
dimensions are bound into a single episodic trace and that cueing any
one dimension activates the others~\citep{clayton1998episodic,
clayton2001integrated}.

\paragraph{Perishability-aware temporal decay.}
Jays preferentially recover worms after short retention intervals (when
worms are still fresh) but switch to peanuts after long intervals (when
worms have decayed).  Memory utility is therefore a joint function of
elapsed time and the item's intrinsic decay rate, not a uniform
recency signal~\citep{clayton2003can}.

\paragraph{Retroactive contextual integration.}
When jays learn \emph{after} caching that a food type decays faster
than expected, they revise their recovery strategy
accordingly~\citep{clayton2001integrated}.  This retroactive update
operates on the existing episodic trace; no new caching episode is
required.

\paragraph{Prospective caching (future planning).}
Jays cache more food in compartments where they anticipate being
hungry the following morning, independently of their current
motivational state~\citep{raby2007planning, correia2007western}.  This
constitutes anticipatory memory loading: encoding is biased by
\emph{future} retrieval needs, not solely past experience.

\noindent
Table~\ref{tab:bio-mapping} summarizes the mapping from each biological
finding to its computational counterpart in ScrubJay-MEM; the
architecture is described in full in \S\ref{sec:method}.

\section{Temporal Generalization Test: Construction Details}
\label{app:tgt}
\begin{table*}[t]
\centering
\caption{Perishability taxonomy (auto-classified at encoding time).}
\label{tab:perishability}
\small
\begin{tabular}{@{}lccp{3.6cm}@{}}
\toprule
\textbf{Memory Type} & $\boldsymbol{\pi}$ & $\boldsymbol{\tau}$ & \textbf{Example} \\
\midrule
Stable knowledge     & 0.05--0.15  & weeks--months & User's occupation, city \\
Procedural / how-to  & 0.2--0.4    & days--weeks   & Coding style preference \\
Task-specific        & 0.5--0.7    & hours--days   & Current project context \\
Ephemeral            & 0.8--1.0    & min--hours    & Today's meeting time \\
\bottomrule
\end{tabular}
\end{table*}
\begin{table*}[t]
\centering
\caption{Mapping from scrub jay cognition to ScrubJay-MEM components.}
\label{tab:bio-mapping}
\small
\begin{tabular}{@{}lll@{}}
\toprule
\textbf{Biological Finding} & \textbf{Cognitive Mechanism} & \textbf{ScrubJay-MEM Component} \\
\midrule
WWW integration        & Bound episodic trace   & Cross-attention EMU (\S\ref{sec:emu}) \\
Perishability          & Item-specific decay    & Utility function $U$ (\S\ref{sec:utility}) \\
Retroactive revision   & Trace parameter update & RCI (\S\ref{sec:rci}) \\
Prospective caching    & Anticipatory loading   & PMB (\S\ref{sec:pmb}) \\
\bottomrule
\end{tabular}
\end{table*}

\begin{figure*}[t]
  \centering
  \includegraphics[width=0.95\linewidth]{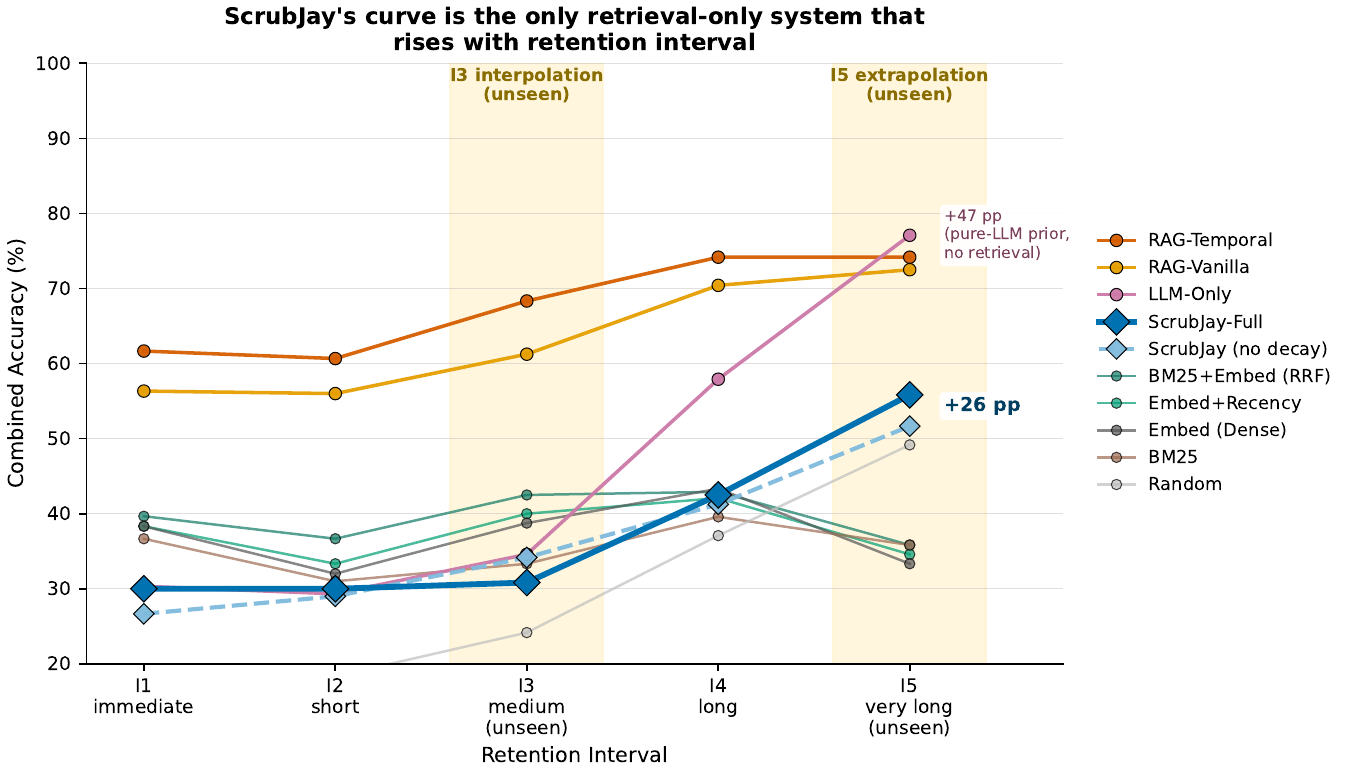}
  \caption{Accuracy across the five retention intervals on E3 TGT.
    Shaded bands mark the two unseen intervals ($I_3$, interpolation;
    $I_5$, extrapolation). ScrubJay-MEM (full) maintains a smooth
    monotonic profile across both unseen intervals; dense, lexical,
    and recency baselines all show characteristic dips at $I_3$
    (negative GenGap). The decay ablation curve shows ScrubJay-MEM
    losing its $I_3$ smoothness when type-conditioned perishability
    is removed.}
  \label{fig:e3-decay}
\end{figure*}

\begin{figure*}[t]
  \centering
  \includegraphics[width=0.78\linewidth]{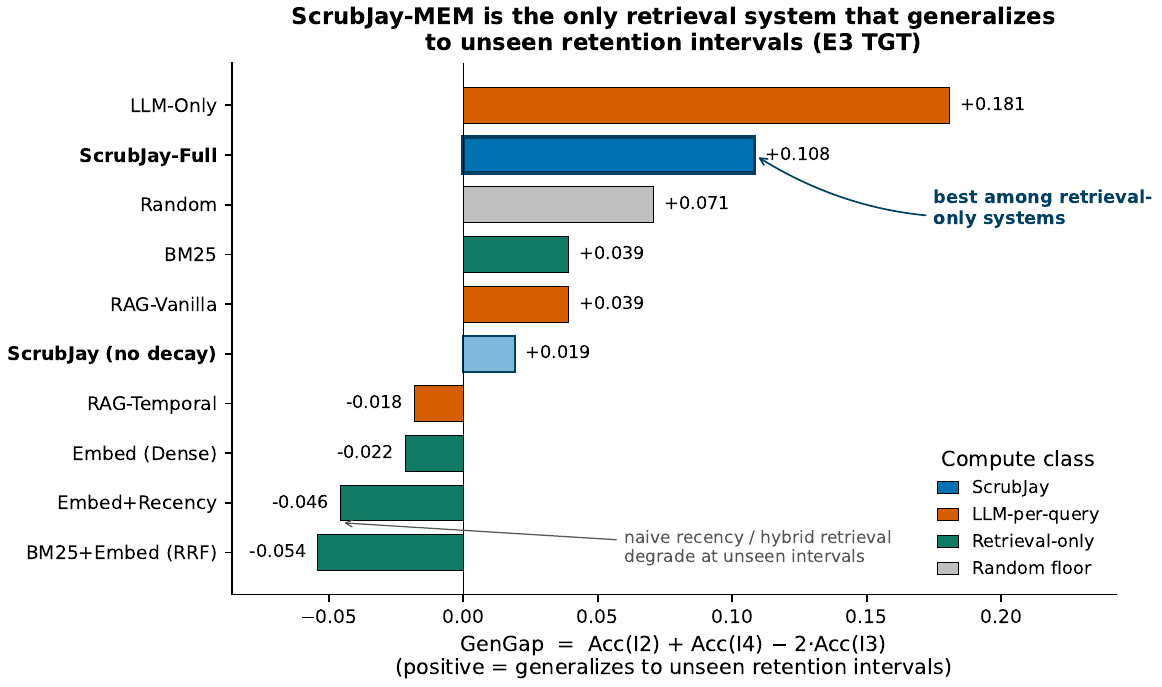}
  \caption{Generalization Gap across all evaluated systems on E3 TGT,
    sorted by GenGap. Among retrieval-based systems (blue), only
    ScrubJay-MEM (full) achieves substantially positive GenGap; every
    flat-retrieval baseline has GenGap $\leq 0$. The decay ablation
    (hatched) reduces GenGap by $5.7\times$, isolating type-conditioned
    perishability as the responsible mechanism. Note that LLM-only
    (B1, gray) has positive GenGap at low factual accuracy ($44.4\%$),
    reflecting that uninformed answers carry no interval-specific
    bias.}
  \label{fig:e3-gengap}
\end{figure*}

\begin{figure*}[t]
  \centering
  \includegraphics[width=0.95\linewidth]{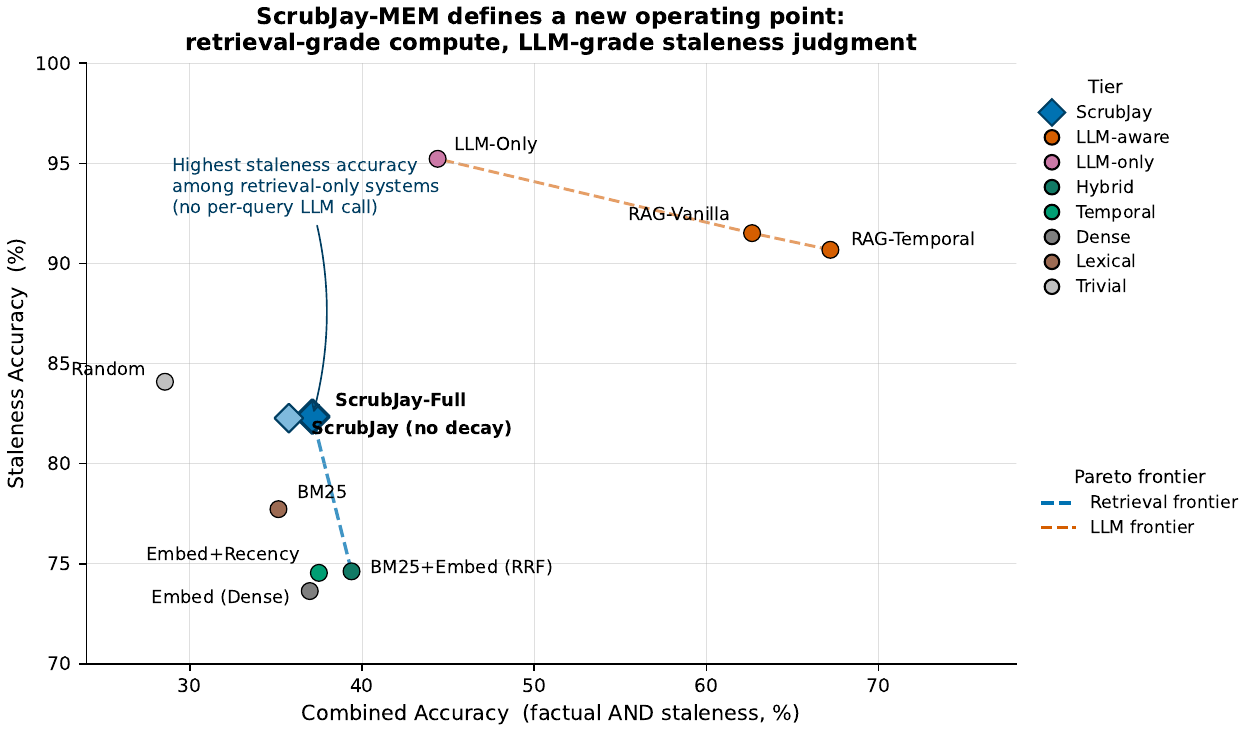}
  \caption{Operating-point view of E3 TGT: staleness accuracy vs.\
    combined (factual $\wedge$ staleness) accuracy. The plot exposes
    two distinct Pareto frontiers: an LLM-frontier traced by
    RAG-vanilla and RAG-temporal (one LLM call per query) and a
    retrieval-frontier traced by lexical/dense/hybrid baselines (no
    per-query LLM call). ScrubJay-MEM sits at the top of the
    retrieval frontier, matching LLM-only's staleness accuracy ($82\%$
    vs.\ $95\%$) at retrieval-grade per-query cost; flat retrieval
    baselines collapse to $73\text{--}78\%$ staleness accuracy. The
    decay ablation collapses GenGap (\Cref{fig:e3-gengap}) but barely
    moves this operating point, confirming that decay drives
    \emph{generalization} rather than mean accuracy.}
  \label{fig:e3-pareto}
\end{figure*}

This appendix specifies the TGT benchmark in full: memory taxonomy,
interval schedule, query design, generation pipeline, controlled
vocabularies, quality audit, and validation methodology. The
benchmark, generation scripts, and a frozen $20$-instance release
artifact is available in the supplementary.

\begin{table*}[t]
  \centering
  \small
  \caption{Ground-truth validity schedule. A memory of type $X$ queried
    at interval $I_k$ is valid iff the corresponding cell is \cmark.
    Held-out intervals are shaded.}
  \label{tab:tgt-validity}
  \begin{tabular}{lccccc}
    \toprule
                       & $I_1$ & $I_2$ & \cellcolor{gray!15}$I_3$ & $I_4$ & \cellcolor{gray!15}$I_5$ \\
                       & immediate & short & \cellcolor{gray!15}medium & long & \cellcolor{gray!15}very long \\
    \midrule
    A: Ephemeral             & \cmark & \cmark & \cellcolor{gray!15}\xmark & \xmark & \cellcolor{gray!15}\xmark \\
    B: Session-Specific      & \cmark & \cmark & \cellcolor{gray!15}\cmark & \xmark & \cellcolor{gray!15}\xmark \\
    C: Durable Preference    & \cmark & \cmark & \cellcolor{gray!15}\cmark & \cmark & \cellcolor{gray!15}\xmark \\
    D: Stable Knowledge      & \cmark & \cmark & \cellcolor{gray!15}\cmark & \cmark & \cellcolor{gray!15}\cmark \\
    \bottomrule
  \end{tabular}
\end{table*}

\subsection{Memory Taxonomy}
\label{app:tgt-types}

Each memory belongs to one of four perishability classes with
type-specific ranges for the ground-truth decay parameters
($\pi_i$, $\tau_i$) used by ScrubJay-MEM and reported alongside every
memory for evaluation purposes:

\begin{table*}[h]
  \centering
  \small
  \caption{Memory taxonomy. $\pi_i$ is the perishability coefficient
    used by ScrubJay-MEM's utility function (\Cref{sec:utility});
    $\tau_i$ is the utility horizon in sessions. Ground-truth values
    are sampled uniformly within the listed ranges. Both parameters
    are stored alongside each memory to enable post-hoc analysis but
    are not visible to systems during retrieval.}
  \label{tab:tgt-types}
  \begin{tabular}{llcccl}
    \toprule
    Type & Label              & $\pi_i$ range & $\tau_i$ (sessions) & Categories & Real-world analog \\
    \midrule
    A & Ephemeral             & 0.80--0.95 & 0.5--2.0    & 5 & current location, today's plan \\
    B & Session-Specific      & 0.55--0.75 & 3.0--8.0    & 5 & recent decisions, active bugs \\
    C & Durable Preference    & 0.15--0.35 & 15.0--40.0  & 5 & tool/food preferences, working style \\
    D & Stable Knowledge      & 0.02--0.10 & 80.0--200.0 & 5 & occupation, family, education \\
    \bottomrule
  \end{tabular}
\end{table*}

\begin{figure*}[t]
  \centering
  \includegraphics[width=\linewidth]{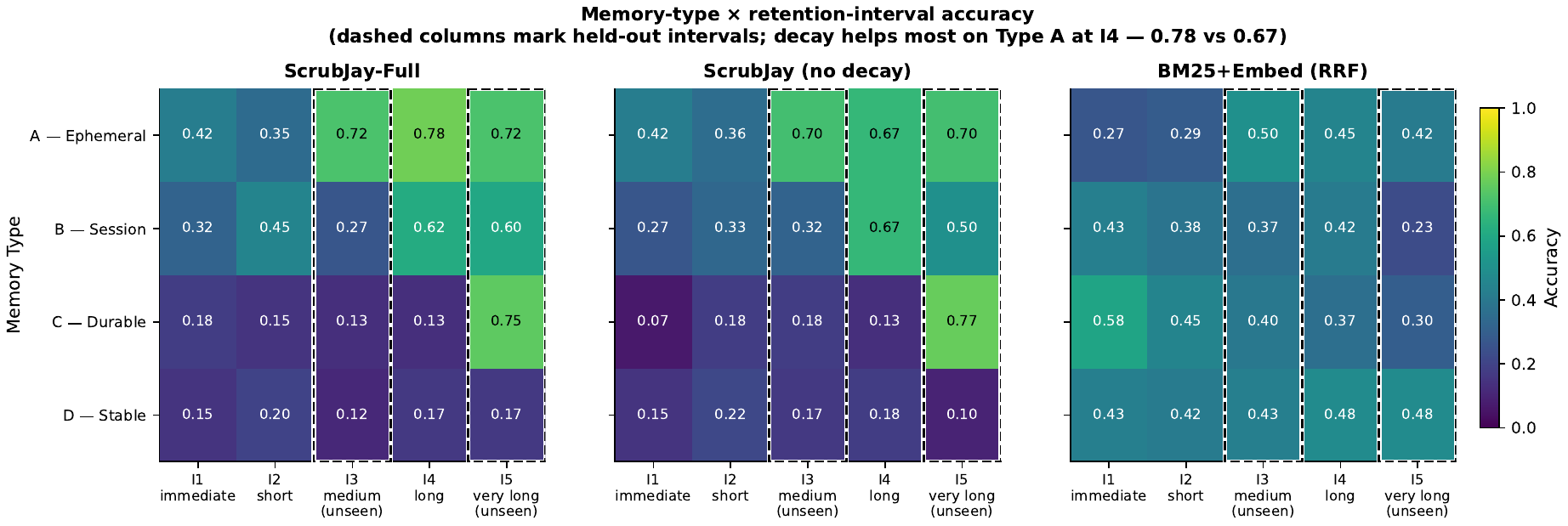}
  \caption{Per-cell accuracy by memory type ($A$--$D$) and retention
    interval ($I_1$--$I_5$). Dashed columns mark the two held-out
    intervals. ScrubJay-MEM (full, left) outperforms the no-decay
    ablation (middle) most strongly on Type-$A$ ephemeral memories at
    long intervals, in particular at $I_4$ ($0.78$ vs.\ $0.67$) and
    $I_5$ ($0.72$ vs.\ $0.70$), where type-conditioned perishability
    correctly suppresses stale ephemeral facts. The flat-hybrid
    baseline (BM25+Embed RRF, right) wins on the Type-$D$ stable row
    by retrieving everything indiscriminately; ScrubJay-MEM trades
    Type-$D$ accuracy for stronger generalization across the
    perishable types.}
  \label{fig:tgt-typeinterval}
\end{figure*}

Twenty categories total span the four classes:
\begin{description}\itemsep0pt
  \item[Type A:] \texttt{current\_location}, \texttt{todays\_plan},
        \texttt{transient\_mood}, \texttt{immediate\_context},
        \texttt{real\_time\_status}.
  \item[Type B:] \texttt{project\_decision}, \texttt{meeting\_outcome},
        \texttt{recent\_preference\_shift}, \texttt{active\_bug},
        \texttt{short\_term\_goal}.
  \item[Type C:] \texttt{food\_preference}, \texttt{communication\_style},
        \texttt{tool\_preference}, \texttt{working\_hours},
        \texttt{aesthetic\_preference}.
  \item[Type D:] \texttt{occupation}, \texttt{location}, \texttt{long\_term\_goal},
        \texttt{family\_info}, \texttt{education}.
\end{description}

\subsection{Retention Intervals}
\label{app:tgt-intervals}

Each instance issues queries at five retention intervals. The session
offset $\Delta t$ for each query is sampled uniformly within the
interval's session range:

\begin{table*}[h]
  \centering
  \small
  \caption{Retention interval schedule. ``Held out'' intervals are
    never available for calibration; they test the system's ability
    to interpolate ($I_3$) or extrapolate ($I_5$) its temporal model.}
  \label{tab:tgt-intervals}
  \begin{tabular}{lllcl}
    \toprule
    Interval & Label & Session range & Held out & Avian analog \\
    \midrule
    $I_1$ & Immediate  & 0--1   & no   & 4h (short, trained) \\
    $I_2$ & Short      & 3--5   & no   & known short delay \\
    $I_3$ & Medium     & 8--12  & \textbf{yes} & \textbf{novel intermediate delay} \\
    $I_4$ & Long       & 20--25 & no   & 28h (long, trained) \\
    $I_5$ & Very Long  & 40--60 & \textbf{yes} & \textbf{novel long delay (extrapolation)} \\
    \bottomrule
  \end{tabular}
\end{table*}

\subsection{Query Types}
\label{app:tgt-queries}

For every (interval, type) pair, three queries are generated, totaling
$5 \times 4 \times 3 = 60$ \emph{base} queries per instance:

\begin{enumerate}\itemsep0pt
  \item \textbf{Factual recall:} \emph{``What is $\{$subject$\}$'s
        $\{$attribute\_label$\}$?''} — tests retrieval grounded in the
        target memory.
  \item \textbf{Comparative staleness:} \emph{``Has $\{$subject$\}$'s
        $\{$attribute\_label$\}$ changed recently?''} — tests temporal
        validity judgment. By construction, this template never
        contains the answer (see audit, \Cref{app:tgt-audit}).
  \item \textbf{Prospective:} \emph{``What should I remember about
        $\{$subject$\}$'s $\{$attribute\_label$\}$ for
        $\{$future\_task$\}$?''} — tests forward-looking memory use.
\end{enumerate}

\subsection{Stale Distractor Queries}
\label{app:tgt-stale}

To prevent the system from defaulting to ``always valid'' at early
intervals, $6$ \emph{stale-distractor} queries are injected at $I_1$
and $I_2$ (one per query type per interval, $2 \times 3 = 6$). Each
targets a Type-A memory explicitly marked
$\mathrm{ground\_truth\_valid}=\mathrm{False}$ to simulate an
already-expired ephemeral fact (e.g., the user's location ``today'' is
no longer current because the session is one day later). With
distractors, the always-valid trivial baseline drops from $70\%$ to
$63.6\%$, restoring discriminative signal at the two earliest intervals.

Each instance thus contains $60 + 6 = 66$ queries.

\subsection{Generation Pipeline}
\label{app:tgt-generation}

TGT supports two generation modes. \emph{Template mode} composes
memories from controlled vocabularies (see \Cref{app:tgt-vocab})
using deterministic templates for each category; it is used for unit
testing and reproducibility checks. \emph{Ollama mode} adds an LLM
naturalization pass (\texttt{qwen3:30b-instruct}) that rewrites both
memories and queries in natural prose while preserving the factual
slots. All reported numbers are from Ollama mode.

\paragraph{Memory naturalization.}
The LLM is instructed to rewrite the template memory keeping every
factual slot (subject, attribute, value, future task) intact and
returning a single declarative sentence of at most two clauses. A
checkpoint tag of the form ``\texttt{This was logged in checkpoint
P\{pool\_id\}-\{type\_id\}\{index\}.}'' is appended to every memory
to give the retrieval layer a unique identifier-like surface form.

\paragraph{Query naturalization with leakage prevention.}
For \emph{comparative staleness} queries the LLM prompt includes
\texttt{FORBIDDEN ANSWER CONTENT: \{answer\_text\}} and instructs
the model to ask about the attribute generically without
revealing the value. A post-hoc safety check rejects any
naturalized query whose lowercased text contains
\texttt{answer\_text}; the template is used as a fallback.

\paragraph{Diversity controls.}
\textbf{Without-replacement targeting:} within an instance, each
(interval, type) pair selects a target memory not yet used by any
other interval for that type; the used-set resets only when the
type's pool of $24$ memories is exhausted. This raises intra-instance
query-text uniqueness from $\sim$65\% (with replacement) to a measured
mean of $96\%$ (min $88\%$, max $100\%$).
\textbf{One pool per instance:} the production configuration uses
$20$ pools for $20$ instances ($1{:}1$). Earlier configurations sharing
$5$ pools across $20$ instances produced only $5$ statistically
independent samples and were rejected by the audit (\Cref{app:tgt-audit}).

\subsection{Controlled Vocabularies}
\label{app:tgt-vocab}

Memory and query templates draw from the following pools:
$12$ names, $6$ office areas, $6$ cities, $6$ project codenames,
$5$ filenames, $5$ programming languages, $5$ developer tools,
$5$ data stores, $5$ frameworks, $5$ foods, $4$ communication styles,
$4$ aesthetic preferences, $6$ companies, $5$ universities,
$6$ child names, $4$ long-term goals, $4$ build statuses, $4$ mood
descriptors. Together these support roughly
$5 \cdot 10^{11}$ distinct memory slot combinations before LLM
naturalization, far exceeding the $1{,}920$ memories used in the full
benchmark and ensuring the validation step (\Cref{app:tgt-audit})
finds no duplicate memory text.

\subsection{Metrics}
\label{app:tgt-metrics}

Per-query scoring is performed by an LLM judge (\texttt{llama3.2:3b},
from a different model family than the generation backbone
\texttt{qwen3:30b-instruct}) which returns a structured JSON with two
binary fields:

\begin{itemize}\itemsep0pt
  \item \texttt{factual\_correct} $\in \{0, 1\}$ — does the answer
        contain the ground-truth attribute value?
  \item \texttt{staleness\_aware} $\in \{0, 1\}$ — is the answer's
        validity judgment consistent with the ground-truth
        type$\times$interval schedule (\Cref{tab:tgt-validity})?
\end{itemize}

\emph{Combined accuracy} requires both: $\mathrm{combined}(q) =
\mathbf{1}[\texttt{factual\_correct}(q) = 1 \wedge \texttt{staleness\_aware}(q) = 1]$.
Per-interval accuracy is the mean over queries with $q.\mathrm{interval} = I_k$.
The Temporal Generalization Score is the mean across all five intervals,
$\mathrm{TGS}(f) = \tfrac{1}{5} \sum_{k=1}^{5} \mathrm{Acc}(f, I_k)$.
GenGap is given by \Cref{eq:gengap}. Additional decompositions
reported in the appendix include factual and staleness accuracy in
isolation, per-type-per-interval accuracy, Recall@$5$, MRR, and
per-query latency.

A heuristic fallback judge (token-overlap factual matching with a
60\% threshold; staleness detected via uncertainty markers such as
``stale'', ``outdated'', ``cannot confirm'') is available for
reproducibility runs without an LLM judge, but is not used for any
reported numbers.

\begin{figure}[t]
  \centering
  \includegraphics[width=\linewidth]{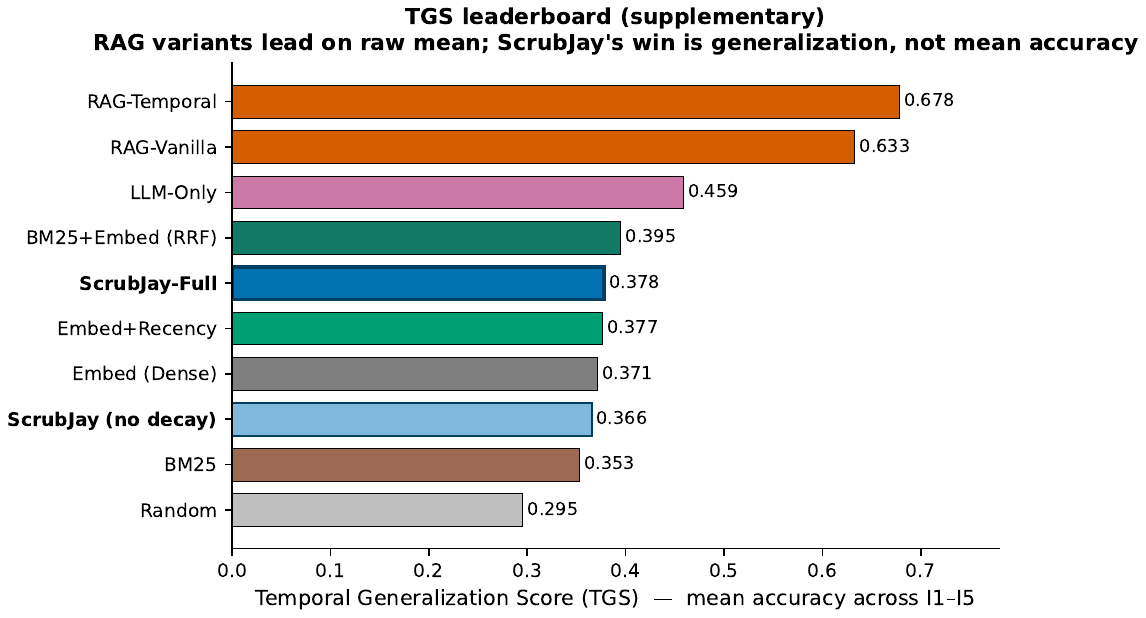}
  \caption{Temporal Generalization Score (TGS) leaderboard: mean
    accuracy across $I_1$--$I_5$. RAG variants lead on raw mean
    accuracy because of LLM-driven answer generation; the
    retrieval-only band (BM25, dense, hybrid, ScrubJay variants) is
    tightly clustered. ScrubJay-MEM's contribution is on the
    generalization axis (\Cref{fig:e3-gengap}), not on raw TGS.}
  \label{fig:tgt-tgs-leaderboard}
\end{figure}

\subsection{Decay Ablation: Per-Metric Decomposition}
\label{app:tgt-ablation}

\Cref{fig:e3-decay} visualizes the decay ablation across retention
intervals; \Cref{fig:tgt-ablation} decomposes the same ablation
across the four TGT metrics. Removing type-conditioned decay leaves
factual, staleness, and combined accuracy essentially unchanged
($|\Delta|\!\le\!1.3$ percentage points each), but collapses GenGap
from $+10.8$ to $+1.9$ (a $5.7\times$ reduction). This decomposition
is the basis for our claim that decay is the mechanism responsible
for generalization rather than for mean accuracy.

\begin{figure}[t]
  \centering
  \includegraphics[width=\linewidth]{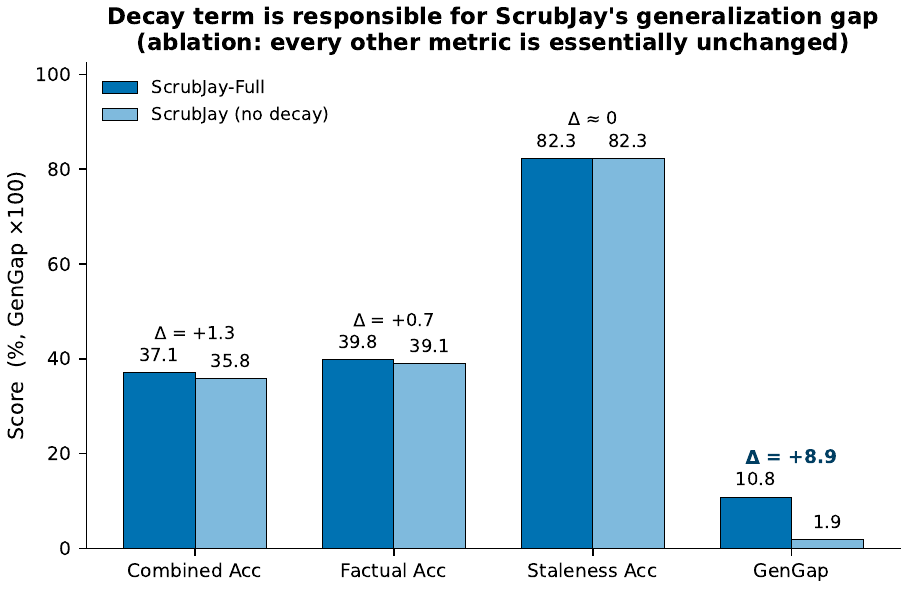}
  \caption{Decay ablation decomposed per metric. ScrubJay-MEM (full)
    vs.\ ScrubJay-MEM (no decay) on Combined, Factual, and Staleness
    accuracy (left three groups) and GenGap (right, $\times 100$ for
    visual scale). All non-GenGap metrics shift by $\leq 1.3$ points;
    GenGap moves by $+8.9$ points. Type-conditioned decay is isolated
    as the generalization mechanism.}
  \label{fig:tgt-ablation}
\end{figure}

\subsection{Trivial Baselines}
\label{app:tgt-trivial}

Several trivial strategies establish the floor of the metric:

\begin{table*}[h]
  \centering
  \small
  \caption{Trivial baseline combined accuracy on TGT.}
  \label{tab:tgt-trivial}
  \begin{tabular}{ll c}
    \toprule
    Strategy & Description & Combined Acc \\
    \midrule
    Always-valid & Treat every memory as still valid                   & 63.6\% \\
    Always-stale & Treat every memory as outdated                      & 36.4\% \\
    Per-interval majority & Predict majority validity per interval     & 70\% (mean) \\
    \bottomrule
  \end{tabular}
\end{table*}

The always-valid floor of $63.6\%$ is a consequence of the validity schedule (\Cref{tab:tgt-validity}) and the stale-distractor injection;
no system should be credited for combined accuracy below this baseline
on factual+staleness joint scoring.

\subsection{Quality Audit and Construction Fixes}
\label{app:tgt-audit}

The initial benchmark generation surfaced several issues, each of which
was resolved before the reported runs. We report the audit transparently
because two of the issues (answer leakage and pool reuse) would have
materially inflated baseline scores if uncorrected.

\begin{table*}[h]
  \centering
  \small
  \caption{Quality audit: pre- and post-fix metrics. All numbers
    measured against $20$ generated instances ($1{,}320$ queries).
    The reported experiments use only the post-fix benchmark.}
  \label{tab:tgt-audit}
  \begin{tabular}{lccl}
    \toprule
    Criterion                       & Before fixes & After fixes & Threshold \\
    \midrule
    Answer leakage rate             & 8.7\% (104/1200) & \textbf{0.0\%} & 0\% required \\
    Effective independent samples   & 5            & \textbf{20}    & $\geq 20$ preferred \\
    Memory uniqueness               & 25\%         & \textbf{100\%} & $>$95\% \\
    Intra-instance query uniqueness & $\sim$65\% (min) & \textbf{96\% mean} & $>$90\% \\
    Stale queries at $I_1$          & 0\%          & \textbf{20\%}  & $>$0\% required \\
    Stale queries at $I_2$          & 0\%          & \textbf{20\%}  & $>$0\% required \\
    Always-valid baseline           & 70.0\%       & \textbf{63.6\%} & $<$70\% preferred \\
    Per-cell sample size (min)      & 20           & \textbf{20}    & $\geq 10$ required \\
    Keyword solvability             & 8.0\%        & \textbf{0.0\%} & $<$5\% \\
    Validation errors               & 0            & \textbf{0}     & 0 required \\
    \bottomrule
  \end{tabular}
\end{table*}

The five concrete fixes were:
\textbf{(F1) Answer leakage} eliminated by rewriting the
\emph{comparative staleness} template to ask about the attribute
generically (``Has $X$'s $Y$ changed recently?''), never about the
value, and by adding the \texttt{FORBIDDEN ANSWER CONTENT} clause and
post-hoc safety check to query naturalization.
\textbf{(F2) Pool reuse} eliminated by setting
\texttt{e3\_num\_pools $=$ e3\_instances} so each instance gets a
unique memory pool.
\textbf{(F3) Intra-instance duplication} reduced by adding
without-replacement target selection per (interval, type) pair.
\textbf{(F4) Missing staleness at early intervals} resolved by
injecting six stale-distractor queries per instance at $I_1$ and $I_2$.
\textbf{(F5) Validation strengthened} to include leakage detection,
intra-instance uniqueness, pool-reuse ratios, and stale-query
presence as audit conditions.

\subsection{Validation Methodology}
\label{app:tgt-validation}

Programmatic validation runs on every generated dataset before the
benchmark is released to a run. The validator checks: memory counts
per type (must equal $24$ each); memory uniqueness within instance
(no duplicate \texttt{text}); target memory presence in candidate
pool; consistency between $\mathrm{ground\_truth\_valid}$ and the
type$\times$interval schedule (stale distractors exempted); absence
of \texttt{answer\_text} verbatim in \texttt{query\_text}; per-cell
sample size; and pool-reuse ratio.

The released benchmark passes all checks with zero errors.

\subsection{Comparison to Existing Benchmarks}
\label{app:tgt-comparison}

\begin{table*}[h]
  \centering
  \small
  \setlength{\tabcolsep}{4pt}
  \renewcommand{\arraystretch}{0.95}
  \caption{TGT vs.\ public memory benchmarks. ``Held-out intervals''
    refers to retention intervals never seen during any calibration
    phase; only TGT explicitly tests interpolation/extrapolation of
    the temporal model.}
  \label{tab:tgt-compare}
  \begin{tabular}{lcccc}
    \toprule
    & LongMemEval & LoCoMo & MemoryAgentBench & TGT (ours) \\
    & \citep{wu2025longmemeval}
    & \citep{maharana2024locomo}
    & \citep{hu2025memoryagentbench}
    & \\
    \midrule
    Total queries
      & 500 & $\sim$2{,}400 & varies & 1{,}320 \\
    Effective indep.\ samples
      & 500 & 50 & varies & 20 \\
    Temporal dimension
      & timestamps & sessions & sessions
      & \textbf{\begin{tabular}[c]{@{}c@{}}continuous +\\ held-out intervals\end{tabular}} \\
    Held-out intervals
      & no & no & no
      & \textbf{yes ($I_3$, $I_5$)} \\
    Staleness at every interval
      & partial & no & partial & \textbf{yes} \\
    Perishability classes
      & n/a & n/a & n/a
      & \textbf{4 ($\pi$-stratified)} \\
    GenGap metric
      & n/a & n/a & n/a & \textbf{this work} \\
    \bottomrule
  \end{tabular}
\end{table*}
TGT is complementary to existing benchmarks rather than a replacement:
LongMemEval and LoCoMo test long-horizon recall over realistic
conversation histories; MemoryAgentBench tests several skills including
fact retrieval and conflict resolution. TGT isolates the temporal
generalization axis specifically, with held-out intervals and a
type-stratified validity schedule. We report ScrubJay-MEM on
MemoryAgentBench EventQA (\Cref{sec:mab-eventqa}) for direct comparison
to published memory systems, and on TGT (\Cref{sec:e3-tgt}) for the
mechanistic claim.

\subsection{Release}
\label{app:tgt-release}

We provide in supplementary:
(i) the frozen $20$-instance benchmark used in
\Cref{sec:e3-tgt} ($1{,}320$ queries, $1{,}920$ memories) as a single
JSON file with full ground-truth annotations;
(ii) the generation scripts and seeds needed to reproduce the
benchmark, including the controlled vocabularies and template
specifications;
(iii) the LLM-judge prompts and the heuristic fallback judge;
(iv) reference implementations of all nine baselines evaluated in
\Cref{sec:e3-tgt}.


\section{EventQA-64k with stronger backbone (\texttt{qwen3:30b})}
\label{app:eventqa-qwen}

To probe whether ScrubJay-MEM's advantage in \Cref{sec:mab-eventqa}
depends on backbone scale, we re-ran the EventQA-64k sweep with
\texttt{qwen3:30b-instruct} as the generation model while keeping the
embedding model (\texttt{nomic-embed-text}) and retrieval depth ($k=5$)
fixed. With this larger backbone, dense retrieval closes the gap and
slightly overtakes ScrubJay-MEM, suggesting that some of the temporal
inductive bias the architecture provides can be recovered at inference
time by a capable LLM reasoning over dense retrievals
(\Cref{tab:eventqa-qwen30b}).

\begin{table}[h]
  \centering
  \small
  \caption{EventQA-64k with \texttt{qwen3:30b-instruct} backbone.
    With a more capable backbone, dense-retrieval baselines close the
    gap, indicating that ScrubJay-MEM's largest gains are in the
    small-LLM regime.}
  \label{tab:eventqa-qwen30b}
  \begin{tabular}{lcc}
    \toprule
    System                  & EM    & F1   \\
    \midrule
    Mem0                    & 56.60 & 72.78 \\
    ScrubJay-MEM (ours)     & 58.20 & 72.82 \\
    BM25                    & 65.00 & 78.02 \\
    Qwen3-Embedding-4B      & 65.80 & 78.53 \\
    Contriever              & 66.80 & 78.68 \\
    \bottomrule
  \end{tabular}
\end{table}

\section{Conflict-Resolution: where ScrubJay-MEM does not help}
\label{app:mab-cr}

\Cref{sec:mab-eventqa} flags that ScrubJay-MEM is miscalibrated on
MemoryAgentBench Conflict-Resolution: type-conditioned decay suppresses
the very memories the consolidation step requires.
\Cref{tab:mab-cr-appendix} reports the numbers.

\begin{table}[h]
  \centering
  \small
  \caption{MemoryAgentBench Conflict-Resolution subsets
    (Factconsolidation-MH, Factconsolidation-SH).
    ScrubJay-MEM underperforms dense and lexical baselines by 14--18
    F1: type-conditioned decay suppresses facts that the
    consolidation task requires.}
  \label{tab:mab-cr-appendix}
  \begin{tabular}{lcc}
    \toprule
    System              & CR-MH F1 & CR-SH F1 \\
    \midrule
    Mem0                & 0.22     & 1.66     \\
    ScrubJay-MEM (ours) & 0.36     & 7.63     \\
    BM25                & 1.00     & 21.48    \\
    Contriever          & 5.40     & 22.40    \\
    Qwen3-Embedding-4B  & n/a      & 20.59    \\
    \bottomrule
  \end{tabular}
\end{table}

\section{Proof of Proposition~\ref{prop:rci}}
\label{app:proofs}

Following \S\ref{sec:rci}, the per-event LLM call emits bounded
deltas $(\Delta V(\mathbf{n}),\,\Delta\pi(\mathbf{n}),\,
\Delta\tau(\mathbf{n}))$, with the dynamic ranges enforced by parsing
the JSON response and clipping out-of-range outputs. The
threshold-rescaled affinity satisfies $\alpha_{in} \in [0, 1]$ by
construction. The four claims of Proposition~\ref{prop:rci} follow
componentwise.

\paragraph{(i) Bounded $V$-update.}
The update is the linear map
\[
  T_V(V_i) \;=\; \max\!\bigl(0,\; V_i + \eta_V\,\alpha_{in}\,\Delta V(\mathbf{n})\bigr).
\]
The change in $V_i$ satisfies
$|T_V(V_i) - V_i| \le \eta_V\,\alpha_{in}\,|\Delta V(\mathbf{n})|
\le \eta_V\,\Delta V_{\max}$, where the inequality uses
$\alpha_{in} \le 1$. The non-negativity clip is a non-expansion onto
$\mathbb{R}_{\ge 0}$ and preserves the bound.

\paragraph{(ii) Bounded $\pi$-update.}
Without the clip, the same argument applies to
$\pi_i + \eta_\pi\,\alpha_{in}\,\Delta\pi(\mathbf{n})$, giving
$|\Delta\pi_i| \le \eta_\pi\,\Delta\pi_{\max}$. The clip
$\operatorname{clip}(\cdot, 0, 1)$ is the projection of $\mathbb{R}$
onto $[0, 1]$ and is $1$-Lipschitz (non-expansive), so the post-clip
change is no larger than the pre-clip change. Hence
$|\Delta\pi_i|_{\text{post-clip}} \le \eta_\pi\,\Delta\pi_{\max}$
inside $[0, 1]$.

\paragraph{(iii) $\tau$ confined to $[\tau_{\min}, \tau_{\max}]$.}
The update is
\begin{equation}
\small
\begin{aligned}
  T_\tau(\tau_i)
  &=
  \operatorname{clip}\!\Bigl(
    \tau_i \cdot
    \max\!\bigl(
      0.1,\,
      1 + \eta_\tau\,\alpha_{in}\,\Delta\tau(\mathbf{n})
    \bigr),
    \\
  &\hspace{2.8cm}
    \tau_{\min},\,\tau_{\max}
  \Bigr).
\end{aligned}
\label{eq:tau-transform}
\end{equation}
The inner $\max(0.1,\cdot)$ guarantees the multiplicative factor is
strictly positive (so $\tau$ never becomes non-positive), and the
outer clip projects onto the operating interval. Hence
$T_\tau : [\tau_{\min}, \tau_{\max}] \to [\tau_{\min}, \tau_{\max}]$
for any choice of $\alpha_{in}$ and $\Delta\tau$ in the assumed
ranges. The multiplicative form does not in general contract toward a
unique target. Under repeated application of the same
$\Delta\tau\!>\!0$ it saturates against $\tau_{\max}$, and against
$\tau_{\min}$ for repeated $\Delta\tau\!<\!0$. The clip endpoints are
absorbing.

\paragraph{(iv) $\mathbf{w}^{\textsc{what}}$-update is a contraction toward $\mathbf{w}_n$.}
Ignoring the L2 normalisation step (which is a radial projection onto
the unit sphere and is non-expansive in the chordal metric), the
update~\eqref{eq:rci-embed} can be rewritten as
\[
  T_w(\mathbf{w}_i^{\textsc{what}})
  \;=\;
  (1 - \eta_w\,\alpha_{in})\,\mathbf{w}_i^{\textsc{what}}
  + \eta_w\,\alpha_{in}\,\mathbf{w}_n^{\textsc{what}}.
\]
For any $\mathbf{u}, \mathbf{v} \in \mathbb{R}^{d_1}$,
\[
  \|T_w(\mathbf{u}) - T_w(\mathbf{v})\|
  \;=\; (1 - \eta_w\,\alpha_{in})\,\|\mathbf{u} - \mathbf{v}\|,
\]
so $T_w$ is a contraction in the Euclidean norm with Lipschitz
constant $(1 - \eta_w\,\alpha_{in}) \in [0, 1)$ whenever
$\eta_w\,\alpha_{in} < 1$. The Banach fixed-point theorem then gives
a unique fixed point $\mathbf{w}_n^{\textsc{what}}$ and geometric
convergence at rate $(1 - \eta_w\,\alpha_{in})^k$ when
$\mathbf{w}_n^{\textsc{what}}$ is held fixed across iterations. The
final L2 normalisation is a non-expansion on the unit sphere and
preserves the contraction property in the chordal metric. $\square$

\paragraph{Practical reading.}
Claims (i)-(iii) say that no single RCI event can move a memory's
latent parameters by more than the LLM's parsed delta times the
learning-rate-affinity product, and that the horizon $\tau$ remains
inside its operating window indefinitely. Claim (iv) says that the
\textsc{what} embedding moves \emph{toward} the new evidence at a
controlled rate; this is the only update among the four with a
genuine contraction structure under the implementation. The
additive forms in (i)-(ii) and the multiplicative form in (iii)
guarantee boundedness rather than convergence to a fixed point;
unbounded drift under repeated application is prevented by the
$\alpha_{in}$ gate (irrelevant evidence has $\alpha_{in}\!=\!0$) and
the clips, not by a contraction property.

\section{Hyperparameters, Prompts, and Compute Environment}
\label{app:hyperparams}

This appendix specifies every numerical and textual constant used by
the v1 reference implementation of ScrubJay-MEM, so that an
independent replication can produce the numbers reported in
\S\ref{sec:experiments} from the released code without further tuning.

\subsection{Hyperparameters}
\label{app:hyperparams-values}

Engine defaults (\texttt{EngineConfig} in the reference implementation):

\begin{table*}[h]
  \centering
  \small
  \caption{ScrubJay-MEM v1 hyperparameter values.}
  \label{tab:hyperparams}
  \begin{tabular}{lll}
    \toprule
    Symbol & Value & Role \\
    \midrule
    $\eta_V$              & $0.20$            & RCI learning rate for $V_i$ \\
    $\eta_\pi$            & $0.15$            & RCI learning rate for $\pi_i$ \\
    $\eta_\tau$           & $0.15$            & RCI learning rate for $\tau_i$ (multiplicative) \\
    $\eta_w$              & $0.20$            & RCI learning rate for $\mathbf{w}^{\textsc{what}}$ \\
    $\theta_{\text{RCI}}$ & $0.60$            & cosine gate for affected-memory selection \\
    $\theta_{\mathcal{B}}$ (\texttt{theta\_buffer}) & $0.45$ & PMB confidence threshold \\
    $\theta_S$, $\theta_C$ & $0.35$, $0.35$  & semantic / contextual edge thresholds \\
    $\epsilon$, $\epsilon_G$ & $0.05$, $0.05$ & utility / graph-connectivity prune thresholds \\
    $w_S$, $w_C$, $w_T$    & $0.40$, $0.30$, $0.30$ & graph mixture (\S\ref{sec:graph}) \\
    $\lambda_T$            & $1{/}86400~\mathrm{sec}^{-1}$ & temporal-edge decay rate \\
    $\Delta_T$             & $7$ days          & temporal-edge window \\
    $\tau_{\min}$, $\tau_{\max}$ & $60$ s, $90$ days & utility-horizon clamp \\
    candidate\_k           & $128$             & first-stage candidate pool \\
    default top-$k$        & $32$              & retrieval depth (engine default) \\
    EventQA top-$k$        & $5$               & per \S\ref{sec:mab-eventqa} \\
    TGT top-$k$            & $10$              & per \S\ref{sec:e3-tgt} \\
    embedding\_dim         & $768$             & \texttt{nomic-embed-text} native \\
    seeds                  & $\{42\}$          & single seed used for reported runs \\
    \bottomrule
  \end{tabular}
\end{table*}

\subsection{Compute environment and benchmark-specific model choices}
\label{app:hyperparams-compute}

All reported numbers are produced by a single-machine local deployment: a workstation with an NVIDIA RTX A4500 20GB GPU running an Ollama daemon at localhost:11434 that serves the chat and judge models on demand and the embedding model (nomic-embed-text, 768-dim).
The reference implementation is single-process Python; FAISS is used
as the dense-index backend when available, with a numpy fallback.
The two benchmarks use different model configurations
(\Cref{tab:benchmark-models}); the EventQA configuration uses
MemoryAgentBench's official non-LLM scorer, while TGT uses an LLM
judge from a different model family than the system generator to
mitigate self-preference bias~\citep{panickssery2024llmselfpref}.

\begin{table*}[h]
  \centering
  \small
  \caption{Benchmark-specific model configurations for the runs
    reported in this paper. Family abbreviations: Meta (M), Alibaba
    (A), Nomic (N).}
  \label{tab:benchmark-models}
  \begin{tabular}{lll}
    \toprule
    Component & EventQA-64k (\S\ref{sec:mab-eventqa}) & TGT (\S\ref{sec:e3-tgt}) \\
    \midrule
    System generation backbone & \texttt{llama3.1:8b} (M) & \texttt{qwen3:30b-instruct} (A) \\
    Embedding model            & \texttt{nomic-embed-text} (N) & \texttt{nomic-embed-text} (N) \\
    Benchmark naturalisation   & n/a                       & \texttt{qwen3:30b-instruct} (A) \\
    Answer-correctness judge   & MAB official scorer (no LLM) & \texttt{llama3.2:3b} (M) \\
    Retrieval depth ($k$)      & $5$                       & $10$ \\
    Chunk size                 & $4096$ tokens             & n/a \\
    Random seed                & $42$                      & $42$ \\
    \bottomrule
  \end{tabular}
\end{table*}

We report no wall-clock or throughput numbers because absolute
timings depend on the Ollama configuration of the host machine, but
component-level latencies (LLM call per RCI event, embedding latency
per ingest) are recorded in the released run artefacts.

\subsection{Perishability-classification prompt}
\label{app:hyperparams-prompts-pi}

The LLM-prompt classifier $\phi$ of \S\ref{sec:utility} uses the
following template; the response is required to be strict JSON.

\begin{quote}\small\ttfamily
Classify memory perishability and utility horizon. Return strict
JSON: \{"label": str, "pi": float, "tau\_sec": float\}.\\
Labels: factual, procedural, task\_specific, ephemeral.\\
Context: \{context\}\\
Memory: \{text\}
\end{quote}

On parse failure or LLM unavailability, the system falls back to a
deterministic keyword heuristic over the concatenated memory and
context text: presence of any of \{\texttt{today},
\texttt{immediate}, \texttt{right now}, \texttt{session},
\texttt{temporary}\} $\to$ ephemeral with $\pi\!=\!0.9$,
$\tau\!=\!2$\,h; \{\texttt{how to}, \texttt{steps},
\texttt{process}, \texttt{procedure}, \texttt{workflow}\} $\to$
procedural with $\pi\!=\!0.3$, $\tau\!=\!10$ days;
\{\texttt{task}, \texttt{ticket}, \texttt{issue},
\texttt{meeting}, \texttt{project}\} $\to$ task-specific with
$\pi\!=\!0.6$, $\tau\!=\!24$\,h. Otherwise the memory is treated as
factual / world knowledge with $\pi\!\approx\!0.1$ and $\tau$ set to
$45$ days, with a small ($\le 0.05$) contextual-divergence
perturbation.

\subsection{RCI delta-extraction prompt}
\label{app:hyperparams-prompts-rci}

The RCI $\Delta$-parser of \S\ref{sec:rci} uses:

\begin{quote}\small\ttfamily
Given new information, infer latent memory update deltas. Return
strict JSON: \{"delta\_value": float, "delta\_pi": float,
"delta\_tau": float, "summary": str\}.\\
delta\_tau is relative ratio change where +0.2 means +20\%.\\
Context: \{context\}\\
New information: \{new\_info\_text\}
\end{quote}

The three numerical deltas are bounded at parsing time before being
applied via Eqs.~\eqref{eq:rci-v}--\eqref{eq:rci-tau}.

\subsection{Future-oriented annotation prompt}
\label{app:hyperparams-prompts-annot}

The prospective annotation of \S\ref{sec:encoding} concatenates the
LLM's response to the following prompt onto the raw memory text
before embedding:

\begin{quote}\small\ttfamily
Summarize in one sentence when this memory will be useful in future
tasks. Return plain text only.\\
Context: \{context\}\\
Memory: \{text\}
\end{quote}

\end{document}